\documentclass{article}

\PassOptionsToPackage{numbers,sort, compress}{natbib}



\usepackage[final]{neurips_2021}


\usepackage[utf8]{inputenc} 
\usepackage[T1]{fontenc}    
\usepackage{hyperref}       
\usepackage{url}            
\usepackage{booktabs}       
\usepackage{amsfonts}       
\usepackage{nicefrac}       
\usepackage{microtype}      
\usepackage[dvipsnames]{xcolor}         

\usepackage{amsmath}
\usepackage{tikz}
\usepackage{wrapfig} 


\usepackage{amsmath, amssymb, amsthm, bbm, mathtools, commath, amsfonts, tikz-cd}
\usepackage{multirow}
\usepackage{caption}
\usepackage{subcaption}
\usepackage{float}

\newcommand{\mrm}[1]{\mathrm{#1}}


\newtheorem{theorem}{Theorem}

\newtheorem{lemma}{Lemma}


\newcommand{\R}{\mathbb{R}}
\newcommand{\RR}{\mathbb{R}}

\renewcommand{\norm}[1]{\left\|#1\right\|}

\newcommand{\M}{\mathcal{M}}

\newtheoremstyle{TheoremNum}
    {\topsep}{\topsep}              
    {\itshape}                      
    {}                              
    {\bfseries}                     
    {.}                             
    { }                             
    {\thmname{#1}\thmnote{ \bfseries #3}}
\theoremstyle{TheoremNum}
\newtheorem{thmn}{Theorem}

\newtheoremstyle{CorollaryNum}
    {\topsep}{\topsep}              
    {\itshape}                      
    {}                              
    {\bfseries}                     
    {.}                             
    { }                             
    {\thmname{#1}\thmnote{ \bfseries #3}}
\theoremstyle{CorollaryNum}

\newtheoremstyle{LemmaNum}
    {\topsep}{\topsep}              
    {\itshape}                      
    {}                              
    {\bfseries}                     
    {.}                             
    { }                             
    {\thmname{#1}\thmnote{ \bfseries #3}}
\theoremstyle{LemmaNum}

\usepackage{algorithm}

\title{Equivariant Manifold Flows}

%

\author{
    Isay Katsman*, Aaron Lou*, Derek Lim*, Qingxuan Jiang* \\
    Cornell University\\
    \texttt{\{isk22, al968, dl772, qj46\}@cornell.edu}\\
    \AND
    Ser-Nam Lim\\
    Facebook AI\\
    \texttt{sernam@gmail.com}\\
    \And
    Christopher De Sa\\
    Cornell University\\
    \texttt{cdesa@cs.cornell.edu}
}

\begin{document}

\maketitle

\begin{abstract}
    Tractably modelling distributions over manifolds has long been an important goal in the natural sciences. Recent work has focused on developing general machine learning models to learn such distributions. However, for many applications these distributions must respect manifold symmetries---a trait which most previous models disregard.
In this paper, we lay the theoretical foundations for learning symmetry-invariant distributions on arbitrary manifolds via equivariant manifold flows. We demonstrate the utility of our approach by learning quantum field theory-motivated invariant $SU(n)$ densities and by correcting meteor impact dataset bias.

\end{abstract}

\section{Introduction}
\label{sec:intro}


\begin{wrapfigure}{R}{.35\textwidth}
  \vspace{-5pt}
  \includegraphics[width=.35\textwidth]{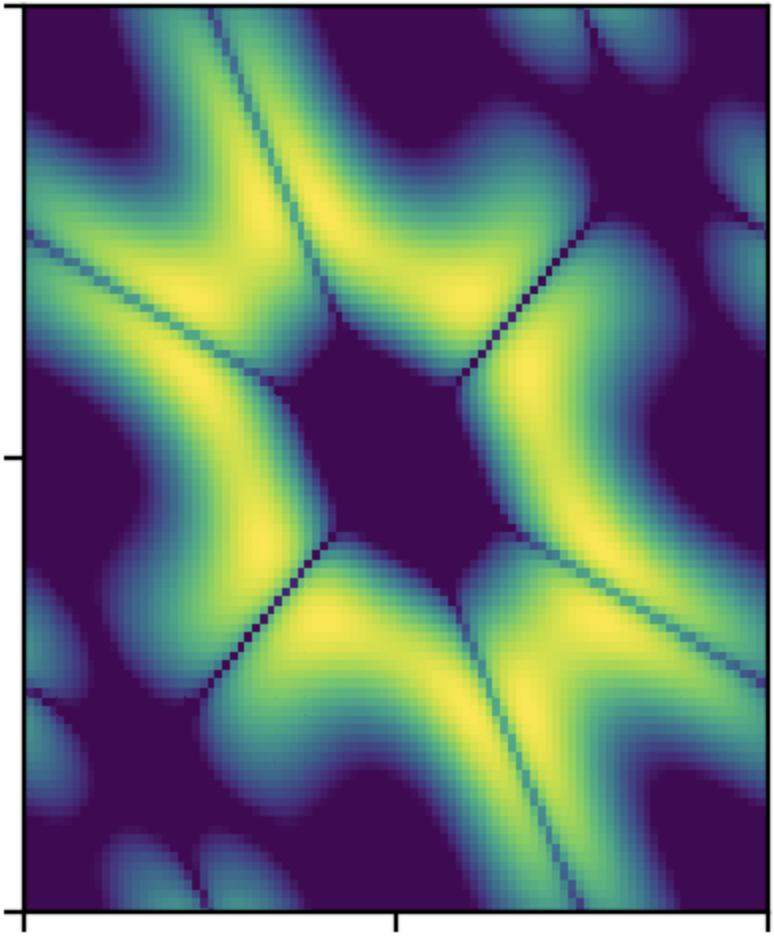}
  \caption{An example of a density on $SU(3)$ that is invariant to conjugation by $SU(3)$. The $x$-axis and $y$-axis are the angles $\theta_1$ and $\theta_2$ for eigenvalues $e^{i\theta_1}$ and $e^{i\theta_2}$ of a matrix in $SU(3)$. The axis range is $-\pi$ to $\pi$.}
  \label{fig:example_su3}
  \vspace{-40pt}
\end{wrapfigure}
{\let\thefootnote\relax\footnotetext{* indicates equal contribution}}

Learning probabilistic models for data has long been the focus of many problems in machine learning and statistics. Though much effort has gone into learning models over Euclidean space \citep{Goodfellow2014GenerativeAN, chen2018neural, grathwohl2018scalable}, less attention has been allocated to learning models over non-Euclidean spaces, despite the fact that many problems require a manifold structure. Density learning over non-Euclidean spaces has applications ranging from quantum field theory in physics~\citep{Wirnsberger2020TargetedFE} to motion estimation in robotics~\citep{Feiten2013RigidME} to protein-structure prediction in computational biology~\citep{Hamelryck2006SamplingRP}.

Continuous normalizing flows (CNFs) \citep{chen2018neural, grathwohl2018scalable} are powerful generative models for learning structure in complex data due to their tractability and theoretical guarantees. Recent work \citep{lou2020neural, mathieu2020riemannian} has extended the framework of continuous normalizing flows to the setting of density learning on Riemannian manifolds. 
However, for many applications in the natural sciences, this construction is insufficient as it cannot properly model necessary symmetries.
For example, such symmetry requirements arise when sampling coupled particle systems in physical chemistry \citep{kohler2020equivariant} or sampling for use in $SU(n)$\footnote{$SU(n)$ denotes the special unitary group $SU(n) = \{ X \in \mathbb{C}^{n \times n} \mid X^* X = I, \; \det(X) = 1\}$.} lattice gauge theories in theoretical physics \citep{boyda2020sampling}.

More precisely, these symmetries are invariances with respect to action by an isometry subgroup of the underlying manifold. For example, consider the task of learning a density on the sphere that is invariant to rotation around an axis; this is an example of learning an isometry subgroup invariant\footnote{This specific isometry subgroup is known as the isotropy group at a point of the sphere intersecting the axis.} density. For a less trivial example, note that when learning a flow-based sampler for $SU(n)$ in the context of lattice QFT \cite{boyda2020sampling}, the learned density must be invariant to conjugation by $SU(n)$ (see Figure \ref{fig:example_su3} for a density on $SU(3)$ that exhibits the requisite symmetry).


One might naturally attempt to work with the quotient of the manifold by the relevant isometry subgroup in order to model the invariance. First, note that this structure is not always a manifold, and additional restrictions are needed on the action to ensure the quotient will have a manifold structure\footnote{In particular, the isometry subgroup action needs to be smooth, free, and proper to ensure the quotient will be a manifold by the Quotient Manifold Theorem \cite{lee2013introduction}.}. Assuming the quotient is in fact a manifold, one then asks whether an invariant density may be modelled by learning over this quotient with a general manifold density learning method such as NMODE \cite{lou2020neural}? Though this seems plausible, it is a problematic approach for several reasons:
\begin{enumerate}
    \item First, it is often difficult to realize necessary constructs (charts, exponential maps, tangent spaces) on the quotient manifold (e.g. this is the case for $\mathbb{RP}^n$, a quotient of $\mathbb{S}^n$ \cite{lee2013introduction}).
    
    
    \item Second, even if the above constructs can be realized, the quotient manifold often has a boundary, which precludes the use of a manifold CNF. To illustrate this point, consider the simple case of the sphere invariant to rotation about an axis; the quotient manifold is a closed interval, and a CNF would ``flow out" on the boundary.
    
    
    \item Third, even if the quotient is a manifold without boundary for which we have a clear characterization, it may have a discrete structure that induces artifacts in the learned distribution. This is the case for \citet{boyda2020sampling}: the flow construction over the quotient induces abnormalities in the density.
    
\end{enumerate}
Motivated by the above drawbacks, we design a manifold continuous normalizing flow on the original manifold that maintains the requisite symmetry invariance. Since vanilla manifold CNFs do not maintain said symmetries, we instead construct \emph{equivariant} manifold flows and show they induce the desired invariance. To construct these flows, we present the first general way of designing equivariant vector fields on manifolds.
A summary of our paper's contributions is as follows:


\begin{itemize}
    \item We present a general framework and the requisite theory for learning equivariant manifold flows: in our setup, the flows can be learned over arbitrary Riemannian manifolds while explicitly incorporating symmetries inherent to the problem. Moreover, we prove that the equivariant flows we construct can universally approximate distributions on closed manifolds.
    \item We demonstrate the efficacy of our approach by learning gauge invariant densities over $SU(n)$ in the context of quantum field theory. In particular, when applied to the densities in \citet{boyda2020sampling}, we adhere more naturally to the target geometry and avoid the unnatural artifacts of the quotient construction.
    \item We highlight the benefit of incorporating symmetries into manifold flow models by comparing directly against previous general manifold density learning approaches. We show that when a general manifold learning model is not aware of symmetries inherent to the problem, the learned density is of considerably worse quality and violates said symmetries. Prior to our work, there did not exist literature that demonstrated the benefits of incorporating isometry group symmetries for learning flows on manifolds, yet we achieve these benefits, and do so through a novel equivariant vector field construction.

\end{itemize}

\section{Related Work}
\label{sec:rel_work}

Our work builds directly on pre-existing manifold normalizing flow models and enables them to leverage inherent symmetries through equivariance. In this section we cover important developments from the relevant fields: manifold normalizing flows and equivariant machine learning.

\paragraph{Normalizing Flows on Manifolds} Normalizing flows on Euclidean space have long been touted as powerful generative models \citep{dinh2016density, chen2018neural, grathwohl2018scalable}. Similar to GANs \cite{Goodfellow2014GenerativeAN} and VAEs \cite{kingma2013auto}, normalizing flows learn to map samples from a tractable prior density to a target density. However, unlike the aforementioned models, normalizing flows account for changes in volume, enabling exact evaluation of the output probability density. In a rather concrete sense, this makes them theoretically principled. As such, they are ideal candidates for generalization beyond the Euclidean setting, where a careful, theoretically principled modelling approach is necessary.

Motivated by recent developments in geometric deep learning \cite{Bronstein2021GeometricDL}, many methods have extended normalizing flows to Riemannian manifolds. \citet{rezende2020normalizing} introduced constructions specific to tori and spheres, while \citet{bose2020latent} introduced constructions for hyperbolic space. Following this work, \citet{lou2020neural, mathieu2020riemannian, Falorsi2020NeuralOD} concurrently introduced a general construction by extending Neural ODEs \citep{chen2018neural} to the setting of Riemannian manifolds. Our work takes inspiration from the methods of \citet{lou2020neural, mathieu2020riemannian} and generalizes them further to enable learning that takes into account symmetries of the target density.

\paragraph{Equivariant Machine Learning} Motivated by the observation that many classic neural network architectures incorporate symmetry as an inductive bias, recent work has leveraged symmetries inherent in data through the concept of equivariance \citep{cohen2016group, cohen2018spherical, cohen2019gauge, kondor2018generalization, finzi2020generalizing, rezende2019equivariant}. \citet{kohler2020equivariant}, in particular, used equivariant normalizing flows to enable learning symmetric densities over Euclidean space. The authors note their approach is better suited to density learning in some physical chemistry settings (when compared to general purpose normalizing flows), since they take into account the symmetries of the problem.

Symmetries also appear naturally in the context of learning densities over manifolds. While in many cases symmetry can be a good inductive bias for learning\footnote{For example, asteroid impacts on the sphere can be modelled as being approximately invariant to rotation about the Earth's axis.}, for certain test tasks it is a strict requirement. For example, \citet{boyda2020sampling} introduced equivariant flows on $SU(n)$ for use in lattice gauge theories, where the modelled distribution must be conjugation invariant. However, beyond conjugation invariant learning on $SU(n)$ \cite{boyda2020sampling}, not much other work has been done for learning invariant distributions over manifolds. Our work bridges this gap by introducing the first general equivariant manifold normalizing flow model for arbitrary manifolds and symmetries.



\vspace{-5pt}
\section{Background}
\label{sec:background}

In this section, we provide a terse overview of necessary concepts for understanding our paper. In particular, we address fundamental notions from Riemannian geometry as well as the basic set-up of normalizing flows on manifolds. For a more detailed introduction to Riemannian geometry, we refer the reader to textbooks such as \citet{lee2013introduction} and \citet{Kobyzev_2020}.

\vspace{-5pt}

\subsection{Riemannian Geometry}

A Riemannian manifold $(\M, h)$ is an $n$-dimensional manifold with a smooth collection of inner products $(h_x)_{x \in \M}$ for every tangent space $T_x\M$. The Riemannian metric $h$ induces a distance $d_h$ on the manifold.

A diffeomorphism $f: \M \to \M$ is a differentiable bijection with differentiable inverse. A diffeomorphism $f: \M \to \M$ is called an isometry if $h(D_xf(u), D_xf(v)) = h(u, v)$ for all tangent vectors $u, v \in T_x\M$ where $D_xf$ is the differential of $f$. Note that isometries preserve the manifold distance function. The collection of all isometries forms a group $G$, which we call the isometry group of the manifold $\M$.

Riemannian metrics also allow for a natural analogue of gradients on $\R^n$. For a function $f: \M \to \R$, we define the Riemannian gradient $\nabla_x f$ to be the vector on $T_x \M$ such that $h(\nabla_xf, v) = D_xf(v)$ for $v \in T_x\M$.

\subsection{Normalizing Flows on Manifolds}

\paragraph{Manifold Normalizing Flow} Let $(\mathcal{M}, h)$ be a Riemannian manifold. A normalizing flow on $\mathcal{M}$ is a diffeomorphism $f_{\theta}:\mathcal{M}\rightarrow\mathcal{M}$ (parametrized by $\theta$) that transforms a prior density $\rho$ to model density $\rho_{f_{\theta}}$. The model distribution can be computed via the Riemannian change of variables\footnote{Here, $\det_h$ is the determinant function with volume induced by the Riemannian metric $h$.}: \begin{equation*}
    \rho_{f_{\theta}}(x)=\rho\left(f_{\theta}^{-1}(x)\right)\left|{\textstyle\det_h} D_x f_{\theta}^{-1}\right|.
\end{equation*}

\paragraph{Manifold Continuous Normalizing Flow} A manifold continuous normalizing flow with base point $z$ is a function $\gamma: [0, \infty) \to \M$ that satisfies the manifold ODE \begin{equation*}
    \frac{d\gamma(t)}{dt} = X(\gamma(t), t)\text{,} \qquad \gamma(0)=z.
\end{equation*} 

We define $F_{X, T}: \M \to \M$, $z\mapsto F_{X, T}(z)$ to map any base point $z\in \M$ to the value of the CNF starting at $z$, evaluated at time $T$. This function is known as the (vector field) flow of $X$.

\subsection{Equivariance and Invariance}

Let $G$ be an isometry subgroup of $\M$. We notate the action of an element $g \in G$ on $\M$ by the map $L_g : \M \rightarrow \M$.

\textbf{Equivariant and Invariant Functions}\hspace{1em} We say that a function $f: \mathcal{M} \to \mathcal{N}$ is equivariant if, for all isometries $g_x: \mathcal{M} \to \mathcal{M}$ and $g_y: \mathcal{N} \to \mathcal{N}$, we have $f \circ g_x = g_y \circ f$. We say a function $f: \mathcal{M} \to \mathcal{N}$ is invariant if $f \circ g_x = f$.

\textbf{Equivariant Vector Fields}\hspace{1em}  Let $X: \M \times [0, \infty) \to T\M$, $X(m, t) \in T_m\M$ be a time-dependent vector field on manifold $\mathcal{M}$, with base point $x_{0}\in \M$. $X$ is a $G$-equivariant vector field if $\forall (m, t)\in \mathcal{M}\times [0, \infty)$, $X(L_g m, t) = (D_m L_g) X(m, t)$.

\textbf{Equivariant Flows}\hspace{1em}  A flow $f:\M\rightarrow\M$ is $G$-equivariant if it commutes with actions from $G$, i.e. we have $L_g \circ f = f \circ L_g$.

\textbf{Invariance of Density}\hspace{1em}  A density $\rho$ on a manifold $\M$ is $G$-invariant if, for all $g \in G$ and $x \in \M$ , $\rho(L_gx) = \rho(x)$, where $L_g$ is the action of $g$ on $x$.

\section{Invariant Densities from Equivariant Flows}
\label{sec:theory_main}



Our goal in this section is to describe a tractable way to learn a density over a manifold that obeys a symmetry given by an isometry subgroup $G$. Since this cannot be done directly and it is not clear how a manifold continuous normalizing flow can be altered to preserve symmetry, we will derive the following implications to yield a tractable solution:

\begin{enumerate}
    \item \textbf{$G$-invariant potential $\Rightarrow$ $G$-equivariant vector field (Theorem \ref{thm:1}).} We show that given a $G$-invariant potential function $\Phi: \M \rightarrow \mathbb{R}$, the vector field $\nabla \Phi$ is $G$-equivariant.
    \item \textbf{$G$-equivariant vector field $\Rightarrow$ $G$-equivariant flow (Theorem \ref{thm:2}).} We show that a $G$-equivariant vector field on $\M$ uniquely induces a $G$-equivariant flow.
    \item \textbf{$G$-equivariant flow $\Rightarrow$ $G$-invariant density (Theorem \ref{thm:3}).} We show that given a $G$-invariant prior $\rho$ and a $G$-equivariant flow $f$, the flow density $\rho_{f}$ is $G$-invariant.
\end{enumerate}

These are constructed in the same spirit as the theorems in \citet{kohler2020equivariant} (which also appeared in \citet{papamakarios2019normalizing}), although we note that our results are significantly more general. In addition to extending the domain to Riemannian manifolds, we consider arbitrary symmetry groups while \citet{kohler2020equivariant} only considers the linear Lie group $SO(n)$. As a result, our proof techniques are based on heavy geometric machinery instead of straightforward linear algebra techniques.

If we have a prior distribution on the manifold that obeys the requisite invariance, then the above implications show that we can use a $G$-invariant potential to produce a flow that, in tandem with the CNF framework, learns an output density with the desired invariance. We claim that constructing a $G$-invariant potential function on a manifold is far simpler than directly parameterizing a $G$-invariant density or a $G$-equivariant flow. We shall give explicit examples of $G$-invariant potential constructions in Section \ref{sec:constructing_potentials} that induce a desired density invariance.

Moreover, we show in Theorem \ref{thm:4} that considering equivariant flows generated from invariant potential functions suffices to learn any smooth distribution over a closed manifold, as measured by Kullback-Leibler divergence.

We defer the proofs of all theorems to the appendix.

\subsection{Equivariant Gradient of Potential Function}

We start by showing how to construct $G$-equivariant vector fields from $G$-invariant potential functions.

To design an equivariant vector field $X$, it is sufficient to set the vector field dynamics of $X$ as the gradient of some $G$-invariant potential function $\Phi:\mathcal{M}\rightarrow\mathbb{R}$. This is formalized in the following theorem.

\begin{theorem}\label{thm:1}
Let $(\M, h)$ be a Riemannian manifold and $G$ be its group of isometries (or an isometry subgroup). If $\Phi: \M \to \R$ is a smooth $G$-invariant function, then the following diagram commutes for any $g \in G$: 

\begin{center}
    \begin{tikzcd}
        \M \arrow[r, "L_g"] \arrow[d, "\nabla \Phi"]& \M \arrow[d, "\nabla \Phi"]\\
        T\M \arrow[r, "DL_g"] & T\M
    \end{tikzcd}
\end{center}

or $\nabla_{L_g u} \Phi = D_u L_g(\nabla_u \Phi)$. Hence $\nabla \Phi$ is a $G$-equivariant vector field. This condition is also tight in the sense that it only occurs if $G$ is the isometry subgroup.
\end{theorem}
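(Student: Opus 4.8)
The plan is to unwind the definitions and reduce everything to the defining property of the Riemannian gradient together with the fact that $L_g$ is an isometry. Recall that for a smooth $F : \M \to \R$, the gradient $\nabla_x F \in T_x\M$ is characterized by $h_x(\nabla_x F, v) = D_x F(v)$ for all $v \in T_x\M$. So to prove $\nabla_{L_g x}\Phi = D_x L_g(\nabla_x\Phi)$, I will show that the right-hand side satisfies the characterizing property of the left-hand side: namely that for every $w \in T_{L_g x}\M$, $h_{L_g x}\bigl(D_x L_g(\nabla_x\Phi), w\bigr) = D_{L_g x}\Phi(w)$.

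To carry this out, first I would write an arbitrary $w \in T_{L_g x}\M$ as $w = D_x L_g(v)$ for a unique $v \in T_x\M$, using that $L_g$ is a diffeomorphism so $D_x L_g : T_x\M \to T_{L_g x}\M$ is a linear isomorphism. Then I compute the left side: $h_{L_g x}(D_x L_g(\nabla_x\Phi), D_x L_g(v)) = h_x(\nabla_x\Phi, v)$ by the isometry property of $L_g$, and this equals $D_x\Phi(v)$ by definition of the gradient. For the right side, $G$-invariance of $\Phi$ means $\Phi \circ L_g = \Phi$, so differentiating via the chain rule gives $D_{L_g x}\Phi \circ D_x L_g = D_x\Phi$, hence $D_{L_g x}\Phi(w) = D_{L_g x}\Phi(D_x L_g(v)) = D_x\Phi(v)$. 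The two sides agree, and since $w$ was arbitrary and the gradient is uniquely determined by this pairing condition, we conclude $\nabla_{L_g x}\Phi = D_x L_g(\nabla_x\Phi)$, which is exactly commutativity of the stated diagram and the definition of $\nabla\Phi$ being $G$-equivariant.

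For the final tightness claim — that the equivariance condition "only occurs if $G$ is the isometry subgroup" — I read this as asserting that the isometry property of $L_g$ was genuinely used, i.e.\ if $L_g$ fails to be an isometry then one cannot expect $\nabla_{L_g x}\Phi = D_x L_g(\nabla_x\Phi)$ for all invariant $\Phi$. I would argue this by inspecting where the proof breaks: the step $h_{L_g x}(D_x L_g(\nabla_x\Phi), D_x L_g(v)) = h_x(\nabla_x\Phi, v)$ requires exactly that $L_g$ preserves $h$. Conversely, if $L_g$ is merely a diffeomorphism with $D_x L_g$ not $h$-orthogonal at some point, one can pick a local invariant bump function whose gradient detects the discrepancy between the pulled-back metric and $h$, producing a counterexample. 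I would state this direction somewhat informally, since the phrase in the theorem is a tightness remark rather than a second formal claim.

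The main obstacle here is essentially bookkeeping rather than conceptual: being careful about base points (the gradient on the left lives in $T_{L_g x}\M$ while the one on the right is transported from $T_x\M$) and correctly applying the chain rule to $\Phi\circ L_g = \Phi$ with the differentials composed in the right order. The genuinely geometric input — isometry preserves inner products, so it commutes with the metric-to-gradient duality — is a one-line application; the risk is only in mishandling which differential $D_\bullet L_g$ acts at which point. The tightness converse is the softest part and I would keep its treatment brief.
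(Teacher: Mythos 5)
Your proof is correct and is essentially the same argument as the paper's: both rest on the gradient--metric duality, the chain rule applied to $\Phi \circ L_g = \Phi$, and the isometry property of $D_x L_g$; the paper merely packages the computation by introducing the adjoint $(D_uL_g)^\top$ and showing it inverts $D_uL_g$, whereas you verify the defining pairing identity directly against arbitrary $w = D_xL_g(v)$. Your handling of the tightness remark is also at the same (informal) level of rigor as the paper's.
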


Hence, as long as one can construct a $G$-invariant potential function, one can obtain the desired equivariant vector field. By this construction, a parameterization of $G$-invariant potential functions yields a parameterization of (some) $G$-equivariant vector fields.

\subsection{Constructing Equivariant Manifold Flows from Equivariant Vector Fields}

To construct equivariant manifold flows, we will use tools from the theory of manifold ODEs. In particular, there exists a natural correspondence between equivariant flows and equivariant vector fields. We formalize this in the following theorem:

\begin{theorem}\label{thm:2}
Let $(\M, h)$ be a Riemannian manifold, and $G$ be its isometry group (or one of its subgroups). Let $X$ be any time-dependent vector field on $\mathcal{M}$, and $F_{X, T}$ be the flow of $X$. Then $X$ is a $G$-equivariant vector field if and only if $F_{X,T}$ is a $G$-equivariant vector field flow.
\end{theorem}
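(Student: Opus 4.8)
\textbf{Proof proposal for Theorem \ref{thm:2}.}

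The plan is to prove the two directions separately, with the forward direction ($G$-equivariant vector field $\Rightarrow$ $G$-equivariant flow) resting on the uniqueness of solutions to the manifold ODE, and the reverse direction following by differentiating the flow relation at $t = 0$. For the forward direction, fix $g \in G$ and a base point $z \in \M$, and consider the two curves $t \mapsto L_g(F_{X,t}(z))$ and $t \mapsto F_{X,t}(L_g z)$. Both start at $L_g z$ when $t = 0$. I would show they satisfy the same ODE: for the first curve, differentiating gives $\frac{d}{dt} L_g(\gamma(t)) = D_{\gamma(t)} L_g\bigl(\dot\gamma(t)\bigr) = D_{\gamma(t)} L_g\bigl(X(\gamma(t), t)\bigr)$, and by the $G$-equivariance of $X$ this equals $X(L_g \gamma(t), t)$; hence $L_g \circ \gamma$ solves $\dot\sigma(t) = X(\sigma(t), t)$ with $\sigma(0) = L_g z$, which is exactly the ODE defining $F_{X,t}(L_g z)$. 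By uniqueness of integral curves (Picard–Lindelöf on charts, patched together), the two curves coincide for all $t$ in the common domain, i.e. $L_g \circ F_{X,T} = F_{X,T} \circ L_g$, which is precisely $G$-equivariance of the flow.

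For the reverse direction, assume $F_{X,T}$ is $G$-equivariant for all $T$, so $L_g(F_{X,t}(z)) = F_{X,t}(L_g z)$ identically in $t$ and $z$. Differentiate both sides with respect to $t$ at $t = 0$. The left side gives $D_z L_g\bigl(\frac{d}{dt}\big|_{t=0} F_{X,t}(z)\bigr) = D_z L_g(X(z, 0))$, and the right side gives $\frac{d}{dt}\big|_{t=0} F_{X,t}(L_g z) = X(L_g z, 0)$. Since this holds for every base point $z$ and, repeating the argument with the flow started at any time $s$ rather than $0$, for every time slice, we conclude $X(L_g z, t) = D_z L_g(X(z,t))$ for all $(z,t)$, which is the definition of a $G$-equivariant vector field.

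The main obstacle is making the uniqueness argument fully rigorous on a manifold rather than on $\R^n$: one must check that solutions of the manifold ODE exist and are unique (working in local coordinate charts and invoking the standard Euclidean existence/uniqueness theorem, then noting the solution is chart-independent), and one must be careful about domains of definition — the flow $F_{X,t}$ may only be defined for $t$ in some maximal interval, so the equality $L_g \circ F_{X,t} = F_{X,t} \circ L_g$ should first be established on the intersection of the relevant maximal intervals, with a short argument (using that $L_g$ is a diffeomorphism of $\M$) that the two maximal intervals in fact agree. A secondary technical point is the interchange of differentiation in $t$ with the differential $D L_g$ in the reverse direction, which is justified by smoothness of $L_g$ and the chain rule. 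I would also remark that the theorem statement's phrase ``$G$-equivariant vector field flow'' should be read as ``$G$-equivariant flow'' in the sense of the definition in Section \ref{sec:background} (this appears to be a minor typo in the statement).
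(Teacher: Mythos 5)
Your proposal is correct and follows essentially the same route as the paper: the forward direction is the uniqueness-of-integral-curves argument (the paper packages this by noting $(L_g)_*X = X$ and citing Lee's lemma that the flow of $F_*X$ is $F \circ \theta_t \circ F^{-1}$, which is proved by exactly the uniqueness argument you give directly), and the reverse direction is the same chain-rule differentiation of the flow identity. Your added care about maximal domains and about covering all time slices by restarting the flow at arbitrary $s$ is a reasonable tightening of details the paper glosses over, but it does not change the substance of the argument.
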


Hence we can obtain an equivariant flow from an equivariant vector field, and vice versa.

\subsection{Invariant Manifold Densities from Equivariant Flows}

We now show that $G$-equivariant flows induce $G$-invariant densities. Note that we require the group $G$ to be an isometry subgroup in order to control the density of $\rho_f$, and the following theorem does not hold for general diffeomorphism subgroups.

\begin{theorem}\label{thm:3}
Let $(\M, h)$ be a Riemannian manifold, and $G$ be its isometry group (or one of its subgroups). If $\rho$ is a $G$-invariant density on $\M$, and $f$ is a $G$-equivariant diffeomorphism, then $\rho_f$ is also $G$-invariant.
\end{theorem}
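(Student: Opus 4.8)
The plan is to combine the Riemannian change-of-variables formula for $\rho_f$ with the fact that $G$ acts by isometries, so that the volume distortion introduced by $L_g$ is trivial. Concretely, I would start from the definition
\[
    \rho_f(x) = \rho\paren{f^{-1}(x)} \abs{{\textstyle\det_h} D_x f^{-1}},
\]
and evaluate this at $L_g x$ for an arbitrary $g \in G$ and $x \in \M$. The goal is to show $\rho_f(L_g x) = \rho_f(x)$.

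First I would simplify the argument of $\rho$. Since $f$ is $G$-equivariant we have $L_g \circ f = f \circ L_g$, hence $f^{-1} \circ L_g = L_g \circ f^{-1}$, so $f^{-1}(L_g x) = L_g\paren{f^{-1}(x)}$. Then $G$-invariance of the prior $\rho$ gives $\rho\paren{f^{-1}(L_g x)} = \rho\paren{L_g f^{-1}(x)} = \rho\paren{f^{-1}(x)}$. So the first factor already matches; the work is entirely in the Jacobian factor.

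Next I would handle the determinant. Differentiating the identity $f^{-1} \circ L_g = L_g \circ f^{-1}$ at $x$ and applying the chain rule yields
\[
    D_{x}\paren{f^{-1} \circ L_g} = D_{L_g x} f^{-1} \circ D_x L_g
    \quad\text{and}\quad
    D_{x}\paren{L_g \circ f^{-1}} = D_{f^{-1}(x)} L_g \circ D_x f^{-1},
\]
so $D_{L_g x} f^{-1} \circ D_x L_g = D_{f^{-1}(x)} L_g \circ D_x f^{-1}$, i.e.
\[
    D_{L_g x} f^{-1} = D_{f^{-1}(x)} L_g \circ D_x f^{-1} \circ \paren{D_x L_g}^{-1}.
\]
Taking $\abs{\det_h(\cdot)}$ of both sides and using multiplicativity of the determinant over composition of linear maps between inner-product spaces, I get
\[
    \abs{{\textstyle\det_h} D_{L_g x} f^{-1}}
    = \abs{{\textstyle\det_h} D_{f^{-1}(x)} L_g}\cdot\abs{{\textstyle\det_h} D_x f^{-1}}\cdot\abs{{\textstyle\det_h} D_x L_g}^{-1}.
\]
The crucial point is that $L_g$ is an isometry, so its differential at any point is a linear isometry between the respective tangent spaces (with their $h$-inner products), and therefore has $\abs{\det_h} = 1$. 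Hence the first and third factors on the right cancel, leaving $\abs{\det_h D_{L_g x} f^{-1}} = \abs{\det_h D_x f^{-1}}$.

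Combining the two computations gives $\rho_f(L_g x) = \rho\paren{f^{-1}(x)}\abs{\det_h D_x f^{-1}} = \rho_f(x)$, which is exactly $G$-invariance of $\rho_f$. The one subtlety worth spelling out carefully — and the step I expect to be the main obstacle to make rigorous — is the claim that an isometry has unit Jacobian determinant in the metric volume: one must be careful that $\det_h$ is computed with respect to $h$-orthonormal bases on the (possibly distinct) tangent spaces $T_x\M$ and $T_{L_g x}\M$, and that $D_x L_g$ carries an $h$-orthonormal basis of the former to an $h$-orthonormal basis of the latter precisely because $h(D_xL_g u, D_xL_g v) = h(u,v)$. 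This is really a restatement that $L_g$ preserves the Riemannian volume form, and it is exactly where the hypothesis "$G$ is an isometry subgroup" (rather than an arbitrary diffeomorphism subgroup) is used, consistent with the remark preceding the theorem.
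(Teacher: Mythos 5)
Your proof is correct and follows essentially the same route as the paper's: apply the change-of-variables formula at $L_g x$, use equivariance of $f$ to commute $f^{-1}$ past $L_g$, use $G$-invariance of $\rho$ for the density factor, and conclude from the fact that an isometry's differential has unit $\det_h$ so the extra Jacobian factors cancel. Your version is, if anything, slightly more careful than the paper's about the chain-rule manipulation of the differentials and about why $\abs{\det_h D_x L_g}=1$.
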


In the context of manifold normalizing flows, Theorem \ref{thm:3} implies that if the prior density on $\M$ is $G$-invariant and the flow is $G$-equivariant, the resulting output density will be $G$-invariant. In the context of the overall set-up, this reduces the problem of constructing a $G$-invariant density to the problem of constructing a $G$-invariant potential function.

\subsection{Sufficiency of Flows Generated via Invariant Potentials}

It is unclear whether equivariant flows induced by invariant potentials can learn arbitrary invariant distributions over manifolds. In particular, it is reasonable to have some concerns about limited expressivity, since it is unclear whether any equivariant flow can be generated in this way. We alleviate these concerns for our use cases by proving that equivariant flows obtained from invariant potential functions suffice to learn any smooth invariant distribution over a closed manifold, as measured by Kullback-Leibler (KL) divergence.

\begin{theorem}\label{thm:4}
Let $(\M, h)$ be a closed Riemannian manifold. Let $\pi$ be a smooth, non-vanishing distribution over $\M$, which will act as our target distribution. Let $\rho_t$ be a distribution over said manifold parameterized by a real time variable $t$, with $\rho_0$ acting as the initial distribution. Let $D_{KL}(\rho_t||\pi)$ denote the Kullback–Leibler divergence between distributions $\rho_t$ and $\pi$. If we choose a $g: \M \rightarrow \mathbb{R}$ such that
\[
g(x) = \log \left( \frac{\pi(x)}{\rho_t(x)} \right),
\]
and if $\rho_t$ evolves with $t$ as the distribution of a flow according to $g$, it follows that
\[
\frac{\partial}{\partial t} D_{KL} (\rho_t \| \pi) = - \int_{\mathcal{M}} \rho_t \exp(g) \| \nabla g \|^2\: dx
= - \int_\mathcal{M} \pi \| \nabla g \|^2\: dx
\]

implying convergence of $\rho_t$ to $\pi$ in $KL$. Moreover, the exact diffeomorphism that takes us from $\rho_0 \rightarrow \pi$ is as follows. Given some initial point $x \in \mathcal{M}$, let $u(t)$ be the solution to the initial value problem given by:
\[
\frac{du(t)}{dt} = \nabla g(t), \qquad u(0) = x
\]
The desired diffeomorphism maps $x$ to $\lim_{t \rightarrow \infty} u(t)$.
\end{theorem}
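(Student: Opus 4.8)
The plan is to compute the time derivative of the KL divergence directly, using the continuity (transport) equation that governs how $\rho_t$ evolves under the flow according to the vector field $\nabla g(t)$, and then recognize the resulting integrand as a manifestly nonpositive quantity. First I would recall that if a density $\rho_t$ is transported by a time-dependent vector field $V_t$ on a closed Riemannian manifold, it satisfies the continuity equation $\partial_t \rho_t = -\operatorname{div}(\rho_t V_t)$, where $\operatorname{div}$ is the Riemannian divergence; here $V_t = \nabla g(t)$ with $g(t) = \log(\pi/\rho_t)$. Then I would write
\[
\frac{\partial}{\partial t} D_{KL}(\rho_t \| \pi) = \frac{\partial}{\partial t} \int_{\M} \rho_t \log\!\left(\frac{\rho_t}{\pi}\right) dx = \int_{\M} (\partial_t \rho_t)\left(\log\frac{\rho_t}{\pi} + 1\right) dx,
\]
using that $\int_{\M} \partial_t \rho_t \, dx = \partial_t \int_{\M}\rho_t\,dx = 0$ to drop the constant. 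Substituting the continuity equation gives $-\int_{\M} \operatorname{div}(\rho_t \nabla g(t))\,(\log(\rho_t/\pi) + 1)\,dx$, and since $\log(\rho_t/\pi) = -g$, this is $\int_{\M} \operatorname{div}(\rho_t\nabla g)\,(g - 1)\,dx$.

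Next I would integrate by parts (the divergence theorem on a closed manifold, so no boundary terms): $\int_{\M}\operatorname{div}(\rho_t\nabla g)\,(g-1)\,dx = -\int_{\M}\langle \rho_t \nabla g, \nabla(g-1)\rangle\,dx = -\int_{\M}\rho_t\|\nabla g\|^2\,dx$, where $\|\cdot\|$ and $\langle\cdot,\cdot\rangle$ are taken with respect to $h$. Finally, since $g = \log(\pi/\rho_t)$ we have $\rho_t \exp(g) = \rho_t \cdot \pi/\rho_t = \pi$, which yields the two claimed expressions
\[
\frac{\partial}{\partial t}D_{KL}(\rho_t\|\pi) = -\int_{\M}\rho_t\exp(g)\|\nabla g\|^2\,dx = -\int_{\M}\pi\|\nabla g\|^2\,dx \le 0,
\]
the inequality being strict unless $\nabla g \equiv 0$, i.e. unless $g$ is constant, which (since $\pi$ and $\rho_t$ are both densities integrating to $1$) forces $\rho_t = \pi$. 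This monotone decrease, bounded below by $0$, gives convergence of $D_{KL}(\rho_t\|\pi)$; combined with non-vanishing of $\pi$ on the compact manifold (so $\pi$ is bounded below by a positive constant), the vanishing of the dissipation $\int_{\M}\pi\|\nabla g\|^2$ in the limit forces $\nabla g \to 0$ and hence $\rho_t \to \pi$. For the last claim, the diffeomorphism at time $t$ is precisely $F_{\nabla g, t}$, the flow map of the vector field $\nabla g$ as defined in the background section, so $x \mapsto \lim_{t\to\infty} u(t)$ is the composition-limit of these flow maps and transports $\rho_0$ to $\pi$.

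The main obstacle I anticipate is analytic rather than formal: justifying that the initial value problem $\dot u(t) = \nabla g(t)$, $u(0) = x$ has solutions existing for all $t \ge 0$ and that the limit $\lim_{t\to\infty}u(t)$ exists pointwise and defines a diffeomorphism — this requires controlling the regularity of $g(t)$ (hence of $\rho_t$) along the evolution and ruling out finite-time blow-up or non-convergence, which is a genuine PDE well-posedness issue for this nonlinear transport equation. I would handle the global existence on the manifold side using compactness of $\M$ (vector fields on compact manifolds are complete), and treat the convergence as following from the KL dissipation estimate together with smoothness assumptions on $\pi$; a fully rigorous treatment of the rate and mode of convergence of $\rho_t$ to $\pi$ (e.g. via a logarithmic Sobolev-type inequality on $\M$) is the delicate point, and I would either invoke such an inequality or state the convergence at the level of rigor consistent with the analogous Euclidean results in \citet{kohler2020equivariant, papamakarios2019normalizing}. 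The interchange of $\partial_t$ and $\int_{\M}$, and the integration by parts, are routine given the smoothness and compactness and I would not belabor them.
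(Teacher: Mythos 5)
Your overall strategy---differentiate the KL divergence in time, substitute the transport equation for $\partial_t\rho_t$, integrate by parts on the closed manifold, and read off a nonpositive dissipation---is the same as the paper's, and your treatment of the convergence statement and of the limiting diffeomorphism is at a comparable (informal) level; you are in fact more explicit than the paper about where the real analytic work lies (well-posedness of the nonlinear transport equation, a log-Sobolev-type inequality to upgrade vanishing dissipation to convergence). However, there is a concrete break in your chain of equalities. Your computation, carried out with the standard continuity equation $\partial_t\rho_t = -\operatorname{div}(\rho_t\nabla g)$ and the identity $\frac{d}{dt}\int_{\M}\rho_t\log(\rho_t/\pi)\,dx = \int_{\M}\partial_t\rho_t\,(\log(\rho_t/\pi)+1)\,dx$, yields $-\int_{\M}\rho_t\|\nabla g\|^2\,dx$. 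You then assert that this equals $-\int_{\M}\rho_t\exp(g)\|\nabla g\|^2\,dx = -\int_{\M}\pi\|\nabla g\|^2\,dx$. The second equality is fine (since $\rho_t e^{g}=\pi$), but the first is false unless $g\equiv 0$: your integrand is weighted by $\rho_t$, the theorem's by $\pi=\rho_t e^{g}$, and these differ by the factor $e^{g}$. As written, your proof establishes a different (though still nonpositive) dissipation identity than the one claimed.

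The paper reaches the stated formula by different bookkeeping: it writes $\frac{\partial}{\partial t}D_{KL}(\rho_t\|\pi)=\int_{\M}\frac{\pi}{\rho_t}\,\partial_t\rho_t\,dx$ (which is, up to sign, the derivative of the reverse divergence $D_{KL}(\pi\|\rho_t)$ rather than of $\int\rho_t\log(\rho_t/\pi)$) and takes the evolution to be $\partial_t\rho_t=+\nabla\cdot(\rho_t\nabla g)$; integrating by parts against $\nabla(\pi/\rho_t)=\nabla e^{g}=e^{g}\nabla g$ then produces exactly $-\int_{\M}\rho_t e^{g}\|\nabla g\|^2\,dx=-\int_{\M}\pi\|\nabla g\|^2\,dx$. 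In other words, the $e^{g}$ weight in the theorem arises from differentiating $\pi/\rho_t$ rather than $\log(\pi/\rho_t)$ under the integral. To make your write-up consistent you must either (i) adopt the paper's conventions and reproduce its calculation, or (ii) keep your standard conventions and state the dissipation as $-\int_{\M}\rho_t\|\nabla g\|^2\,dx$, noting that it is still nonpositive and vanishes iff $\nabla g\equiv 0$, so the qualitative conclusion (monotone decay and convergence $\rho_t\to\pi$) is unaffected. What you cannot do is pass from $-\int_{\M}\rho_t\|\nabla g\|^2\,dx$ to $-\int_{\M}\pi\|\nabla g\|^2\,dx$ via the substitution $\rho_t e^{g}=\pi$, since your integrand contains no factor of $e^{g}$.
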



Hence if the target distribution is $\pi$, the current distribution is $\rho_0$, and $g$ as defined above is the potential from which the flow controlling the evolution of $\rho_t$ is obtained, then $\rho_t$ converges to $\pi$ in $KL$. This means that considering flows generated by invariant potential functions is sufficient to learn any smooth invariant target distribution on a closed manifold (as measured by KL divergence).



\section{Learning Invariant Densities with Equivariant Flows}

In this section, we discuss implementation details of the methodology given in Section \ref{sec:theory_main}. In particular, we describe the equivariant manifold flow model, provide two examples of invariant potential constructions on different manifolds, and discuss how training is performed depending on the target~task.

\subsection{Equivariant Manifold Flow Model}
\label{sec:equiv_manifold_flow}

For our equivariant flow model, we first construct a $G$-invariant potential function $\Phi: \M \rightarrow \mathbb{R}$ (we show how to construct these potentials in Section~\ref{sec:constructing_potentials}). The equivariant flow model works by using automatic differentiation \cite{Paszke2017AutomaticDI} on $\Phi$ to obtain $\nabla \Phi$, using this $\nabla \Phi$ for the vector field, and integrating in a step-wise fashion over the manifold. Specifically, forward integration and change-in-density (divergence) computations utilize the Riemannian Continuous Normalizing Flows \citep{mathieu2020riemannian} framework. This flow model is used in tandem with a specific training procedure (described in Section \ref{sec:training_paradigm}) to obtain a $G$-invariant model density that approximates some target.

\subsection{Constructing \texorpdfstring{$G$-invariant}{G-invariant} Potential Functions}
\label{sec:constructing_potentials}

In this subsection, we present two constructions of invariant potentials on manifolds. Note that a symmetry of a manifold (i.e. action by an isometry subgroup) will leave part of the manifold free. The core idea of our invariant potential construction is to parameterize a neural network on the free portion of the manifold. While the two constructions we give below are certainly not exhaustive, they illustrate the versatility of our method, which is applicable to general manifolds and symmetries.

\subsubsection{Isotropy Invariance on \texorpdfstring{$S^{2}$}{S2}}
\label{sec:sphere_model}

Consider the sphere $S^2$, which is the Riemannian manifold $\{v \in \R^3: \norm{v} = 1\}$ with the induced pullback metric. The isotropy group for a point $v$ is defined as the subgroup of the isometry group which fixes $v$, i.e. the set of rotations around an axis that passes through $v$. In practice, we let $v = (0, 0, 1)$, so the isotropy group is the group of rotations on the $xy$-plane. An isotropy invariant density would be invariant to such rotations, and hence would look like a horizontally-striped density on the sphere (see Figure \ref{fig:sphere_invariant}).

\paragraph{Invariant Potential Parameterization} We design an invariant potential by applying a neural network to the free parameter. In the case of our specific isotropy group listed above, the free parameter is the $z$-coordinate. The invariant potential is simply a $2$-input neural network with the spatial input being the $z$-coordinate and the time input being the time during integration. As a result of this design, we see that the only variance in the learned distribution that uses this potential will be along the $z$-axis, as desired.

\paragraph{Prior Distributions} For proper learning with a normalizing flow, we need a prior distribution on the sphere that respects the isotropy invariance. There are many isotropy invariant potentials on the sphere. Natural choices include the uniform density (which is invariant to all rotations) and the wrapped distribution with the center at $v$ \citep{Skopek2020MixedcurvatureVA, Nagano2019AWN}. For our experiments, we use the uniform density.

\subsubsection{Conjugation Invariance on \texorpdfstring{$SU(n)$}{SU(n)}}\label{sec:sun_model}

For many applications in physics (specifically gauge theory and lattice quantum field theory), one works with the Lie Group $SU(n)$ --- the group of unitary matrices with determinant $1$. In particular, when modelling probability distributions on $SU(n)$ for lattice QFT, the desired distribution must be invariant under conjugation by $SU(n)$ \citep{boyda2020sampling}. Conjugation is an isometry on $SU(n)$ (see Appendix~\ref{appendix:conjugation_isometry}), so we can model probability distributions invariant under this action with our developed theory.

\paragraph{Invariant Potential Parameterization}\label{sec:sun_invariant}

We want to construct a conjugation invariant potential function $\Phi: SU(n)\to\RR$. Note that matrix conjugation preserves eigenvalues. Thus, for a function $\Phi: SU(n) \to \RR$ to be invariant to matrix conjugation, it has to act on the eigenvalues of $x \in SU(n)$ as a multi-set.

We can parameterize such potential functions $\Phi$ by the DeepSet network from \cite{zaheer2017deep}. DeepSet is a permutation invariant neural network that acts on the eigenvalues, so the mapping of $x \in SU(n)$ is $\Phi(x) = \hat \Phi(\{\lambda_1(x), \ldots, \lambda_n(x) \})$ for some set function $\hat \Phi$. We append the integration time to the input of the standard neural network layers in the DeepSet network.

As a result of this design, we see that the only variance in the learned distribution will be amongst non-similar matrices, while all similar matrices will be assigned the same density value.

\paragraph{Prior Distributions}

For the prior distribution of the flow, we need a distribution that respects the matrix conjugation invariance. We use the Haar measure on $SU(n)$, which is the uniform density over this manifold that is symmetric under gauge symmetry \citep{boyda2020sampling}. The volume element of the Haar measure is given for an $x \in SU(n)$ as $\mrm{Haar}(x) = \prod_{i < j} |\lambda_i(x) - \lambda_j(x)|^2$. We can sample from and compute the log probabilities with respect to this distribution efficiently with standard matrix computations \citep{mezzadri2006generate}.

\subsection{Training Paradigms for Equivariant Manifold Flows}
\label{sec:training_paradigm}

There are two notable ways in which we can use the model described in Section \ref{sec:equiv_manifold_flow}. Namely, we can use it to learn to sample from a distribution for which we have a density function, or we can use it to learn the density given a way to sample from the distribution. These training paradigms are useful in different contexts, as we will see in Section \ref{sec:experiments}.

\paragraph{Learning to sample given an exact density.} In certain settings, we are given an exact density and the task is to learn a tractable sampler for the distribution. For example in \citet{boyda2020sampling}, we are given conjugation-invariant densities on $SU(n)$ for which we know the exact density function (without knowledge of any normalizing constants). In contrast to procedures for normalizing flow training that use negative log-likelihood based losses, we do not have access to samples from the target distribution. Instead, we train our models by sampling from the Haar distribution on $SU(n)$, computing the KL divergence between the probabilities that our model assigns to these samples and the probabilities of the target distribution evaluated at these samples, and backpropagating from this KL divergence loss. When this loss is minimized, we can sample from the target distribution by sampling the prior, then forwarding the prior samples through our model. In the context of \citet{boyda2020sampling}, such a flow-based sampler is important for modelling gauge theories.

\paragraph{Learning the density given a sampler.} In other settings, we are given a way to sample from a target distribution and want to learn the precise density for downstream tasks. For this setting, we sample the target distribution, use our flow to map it to a tractable prior, and use a negative log-likelihood-based loss. 
The flow will eventually learn to assign higher probabilities in sampled regions, and in doing so, will learn to approximate the target density.

\section{Experiments}
\label{sec:experiments}

\begin{figure}[t]
    \centering
    \subfloat[$SU(2)$ learned densities from (Left) our model and (Right) the \citet{boyda2020sampling} model. The target densities are in orange, while the model densities are in blue. The $x$-axis is $\theta$ for an eigenvalue $e^{i\theta}$ of a matrix in $SU(2)$ (note the second eigenvalue is determined as $e^{-i\theta}$). Our model has much better behavior in low-density regions (\citet{boyda2020sampling} fails to eliminate mass around $\pm \pi$) and more smoothly captures the targets in high-density regions.
    ]{\includegraphics[width=.455\columnwidth]{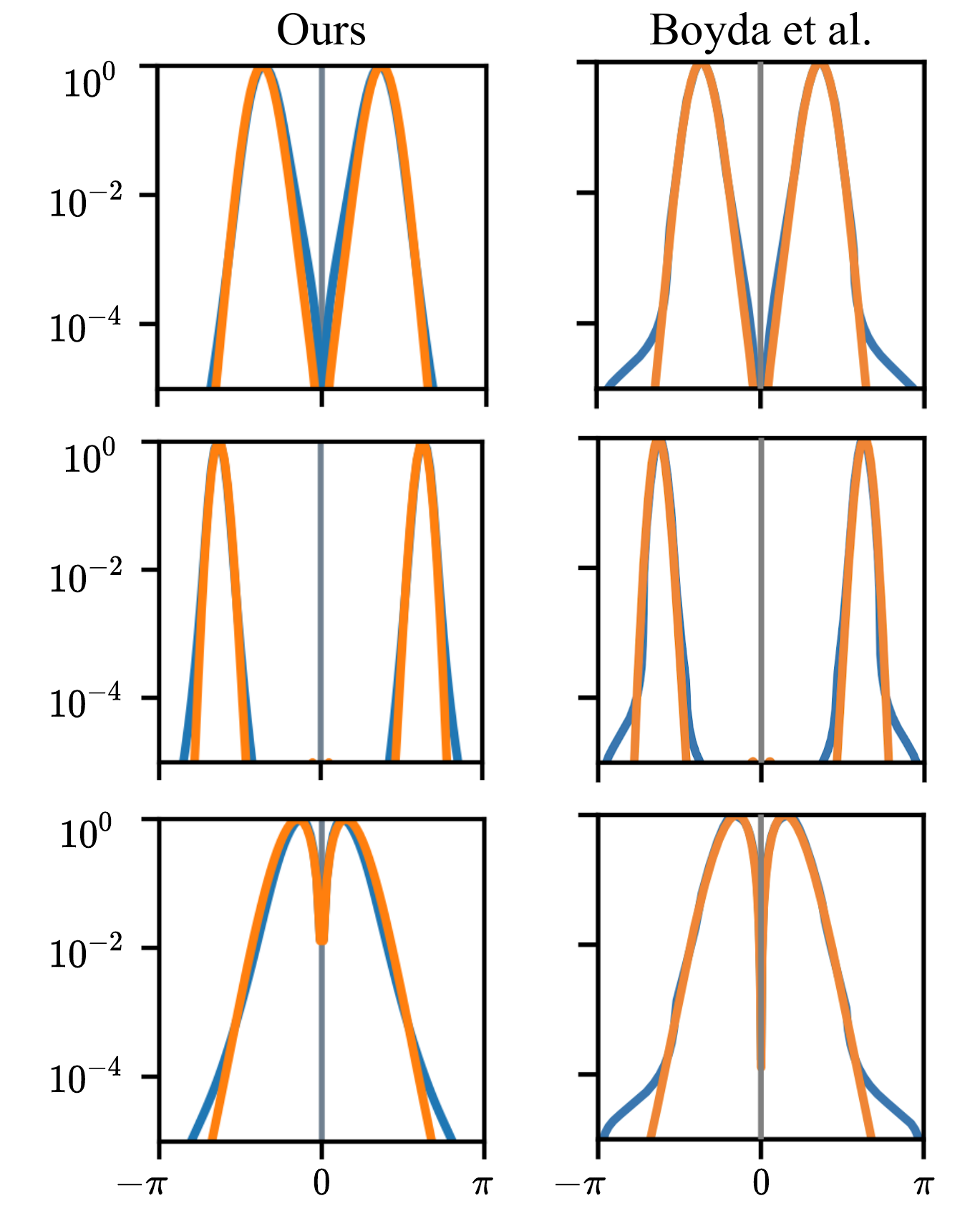}\label{fig:su2_densities}}
    \hfill
    \subfloat[$SU(3)$ learned densities from (Middle) our model and (Right) the \citet{boyda2020sampling} model for different target densities (Left).  The $x$-axis and $y$-axis are the angles $\theta_1$ and $\theta_2$ for eigenvalues $e^{i\theta_1}$ and $e^{i\theta_2}$ of a matrix in $SU(3)$ (note the third eigenvalue is determined as $e^{-i\theta_1 -i\theta_2}$), and the probabilities correspond to colors on a logarithmic scale. Our model better captures the geometry of the target densities and does not exhibit the discrete artifacts of the \citet{boyda2020sampling} model.
    ]{\includegraphics[width=.505\columnwidth]{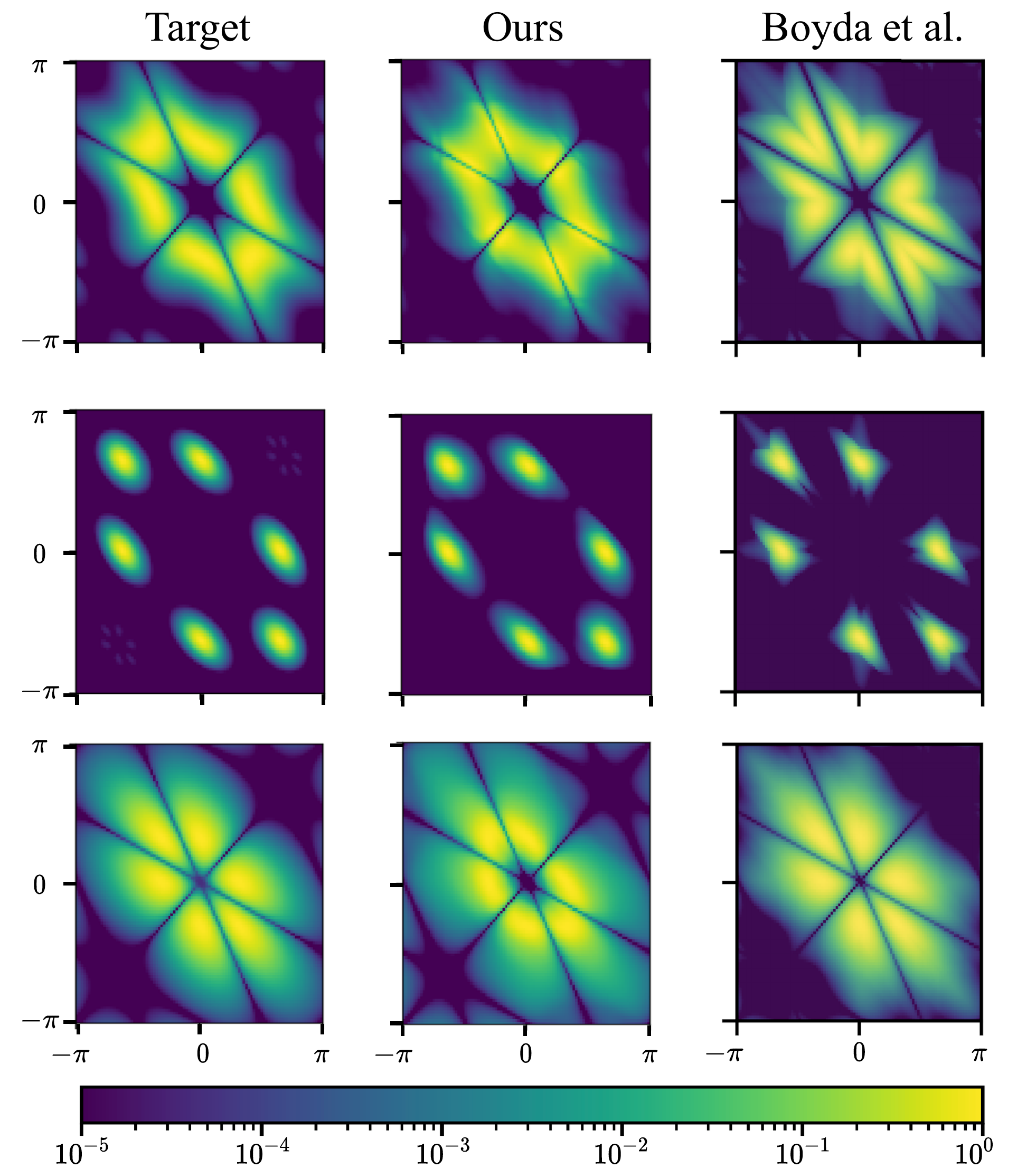}\label{fig:su3_densities}}
    \caption{ Comparison of learned densities on (a) $SU(2)$ and (b) $SU(3)$.
    All densities are normalized to have maximum value 1.}
    \label{fig:sun_densities}
    \vspace{-5pt}
\end{figure}



In this section, we utilize instantiations of equivariant manifold flows to learn densities over various manifolds of interest that are invariant to certain symmetries. First, we construct flows on $SU(n)$ that are invariant to conjugation by $SU(n)$; these are useful for lattice quantum field theory \cite{boyda2020sampling}. In this setting, our model outperforms the construction of \citet{boyda2020sampling}.

As a second application, we model asteroid impacts on Earth by constructing flow models on $S^2$ that are invariant to the isotropy group that fixes the north pole. Our approach is able to overcome dataset bias, as only land impacts are reported in the dataset.

Finally, to demonstrate the need for enforcing equivariance of flow models, we directly compare our flow construction with a general purpose flow while learning a density with an inherent symmetry. The densities we decided to use for this purpose are sphere densities that are invariant to action by the isotropy group. Our model is able to learn these densities much better than previous manifold ODE models that do not enforce equivariance of flows \citep{lou2020neural}, thus showing the ability of our model to leverage the desired symmetries. In fact, even on simple isotropy-invariant densities, our model succeeds while the free model without equivariance fails.


\subsection{\texorpdfstring{$SU(n)$}{SU(n)} Gauge Equivariant Neural Network Flows}

Learning $SU(n)$ gauge equivariant neural network flows is important for obtaining good flow-based samplers of densities on $SU(n)$ useful for lattice quantum field theory \citep{boyda2020sampling}. We compare our model for $SU(n)$ gauge equivariant flows (Section \ref{sec:sun_model}) with that of \citet{boyda2020sampling}. For the sake of staying true to the application area, we follow the framework of \citet{boyda2020sampling} in learning densities on $SU(n)$ that are invariant to conjugation by $SU(n)$. In particular, our goal is to learn a flow to model a target distribution so that we may efficiently sample from it.

As mentioned above in Section \ref{sec:training_paradigm}, this setting follows the first paradigm in which we are given exact density functions and learn how to sample.


For the actual architecture of our equivariant manifold flows, we parameterize our potentials as DeepSet networks on eigenvalues as detailed in Section~\ref{sec:sun_invariant}. The prior distribution for our model is also the Haar (uniform) distribution on $SU(n)$. Further training details are given in Appendix~\ref{appendix:training}.


\subsubsection{\texorpdfstring{$SU(2)$}{SU(2)} }

Figure~\ref{fig:su2_densities} displays learned densities for our model and the model of \citet{boyda2020sampling} in the case of three particular densities on $SU(2)$ described in Appendix~\ref{appendix:target}. While both models match the target distributions well in high-density regions, we find that our model exhibits a considerable improvement in lower-density regions, where the tails of our learned distribution decay faster. By contrast, the model of \citet{boyda2020sampling} seems to be unable to reduce mass near $\pm \pi$, a possible consequence of their construction. Even in high-density regions, our model appears to vary smoothly, with fewer unnecessary bumps and curves when compared to the densities of the model in \citet{boyda2020sampling}.

\subsubsection{\texorpdfstring{$SU(3)$}{SU(3)} }

Figure~\ref{fig:su3_densities} displays learned densities for our model and the model of \citet{boyda2020sampling} in the case of three particular densities on $SU(3)$ described in Appendix~\ref{appendix:target_su3}. In this case, we see that our models fit the target densities more accurately and better respect the geometry of the target distribution. Indeed, while the learned densities of \citet{boyda2020sampling} are often sharp and have pointed corners, our models learn densities that vary smoothly and curve in ways that are representative of the target distributions.



\begin{figure}[htb!]
    \centering
    \includegraphics[width=0.75\textwidth]{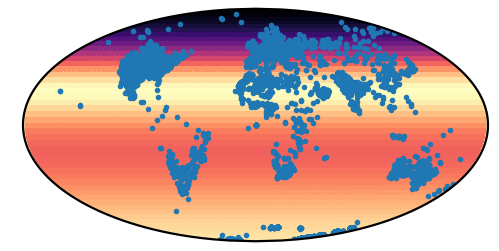}
    \caption{Our modelled distribution of meteor impacts in \citet{data.world_2017}. The true impacts are marked in blue and our isotropy invariant density is shown in the background. Note that a regular manifold normalizing flow would instead model impacts only on land as the dataset does not include any ocean impacts.}
    \label{fig:meteor}
    \vspace{-5pt}
\end{figure}

\subsection{Asteroid Impact Dataset Bias Correction}

We also showcase our model's ability to correct for dataset bias. In particular, we consider the test case of modelling asteroid impacts on Earth. Towards this end, many preexisting works have compiled locations of previous asteroid impacts \cite{data.world_2017, earth_impact_database}, but modelling these datasets is challenging since they are inherently biased. In particular, all recorded impacts are found on land. However, ocean impacts are also dangerous \cite{wardasteroidimpact} and should be properly modelled. To correct for this bias, we note that the distribution of asteroid impacts should be invariant with respect to the rotation of the Earth. We apply our isotropy invariant $S^2$ flow (described in Section \ref{sec:sphere_model}) to model the asteroid impact locations given by the dataset \citet{data.world_2017} \footnote{This dataset was released by NASA without a specified license.}. Training happens in the setting of the second paradigm described in Section \ref{sec:training_paradigm}, since we can easily sample the target distribution and aim to learn the density. We visualize our results in Figure \ref{fig:meteor}.


\subsection{Modelling Invariance Matters}

We also show that our equivariant condition on the manifold flow matters for efficient and accurate training when the target distribution is invariant. In particular, we again consider the sphere under the action of the isotropy group. We try to learn the isotropy invariant density given in Figure \ref{fig:sphere_invariant} and compare the results of our equivariant flow against those of a predefined manifold flow  that does not explicitly model the symmetry \citep{lou2020neural}. While other manifold flow models have been proposed for the sphere \citep{rezende2020normalizing}, NMODE outperforms them~\cite{lou2020neural}, so we use NMODE as a strong baseline. We train for 100 epochs with a learning rate of $0.001$ and a batch size of $200$; our results are shown in Figure \ref{fig:sphere_iso}.

\begin{figure}[H]
    \centering
    \begin{subfigure}[b]{0.26\textwidth}
         \centering
         \includegraphics[width=\textwidth]{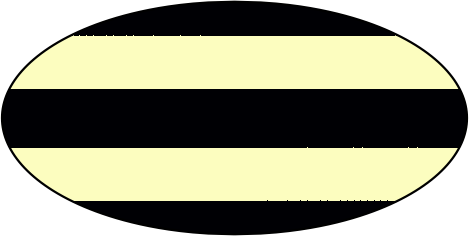}
         \caption{Ground Truth}
         \label{fig:sphere_invariant}
     \end{subfigure}
     \hfill
     \begin{subfigure}[b]{0.26\textwidth}
         \centering
         \includegraphics[width=\textwidth]{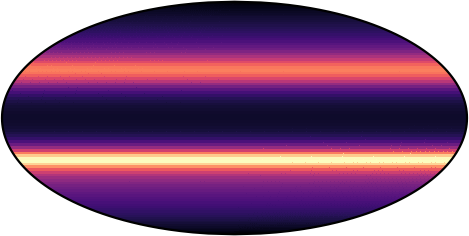}
         \caption{Isotropy Equivariant Flow}
     \end{subfigure}
     \hfill
    \begin{subfigure}[b]{0.26\textwidth}
         \centering
         \includegraphics[width=\textwidth]{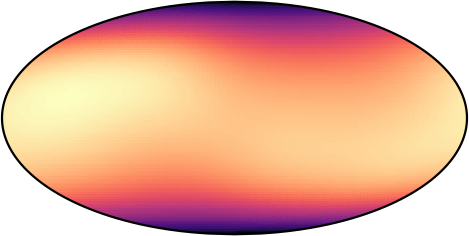}
         \caption{NMODE \cite{lou2020neural}}
     \end{subfigure}
     \caption{We compare the equivariant manifold flow and a regular manifold flow (implemented with NMODE \cite{lou2020neural}) on an invariant dataset. Note that our model is able to accurately capture the ground truth data distribution while NMODE struggles.}
     \label{fig:sphere_iso}
     \vspace{-5pt}
\end{figure}

Despite our equivariant flow having fewer parameters (as both flows have the same width and the equivariant flow has an input dimension of $1$), our model is able to capture the distribution much better than NMODE \cite{lou2020neural}. This is due to the inductive bias of our equivariant model which explicitly leverages the underlying symmetry.

\section{Conclusion}
\label{ref:conclusion}

In this work, we introduce equivariant manifold flows in a fully general context and provide the necessary theory to ensure our construction is principled. We also demonstrate the efficacy of our approach in the context of learning conjugation invariant densities over $SU(2)$ and $SU(3)$, which is an important task for sampling $SU(n)$ lattice gauge theories in quantum field theory. In particular, we show that our method can more naturally adhere to the geometry of the target densities when compared to prior work while being more generally applicable. We also present an application to modelling asteroid impacts and demonstrate the necessity of modelling existing invariances by comparing against a regular manifold flow.

\noindent \textbf{Further considerations.} While our theory and implementations have utility in very general settings, there are still some limitations that could be addressed in future work.
Further research may focus on finding other ways to generate equivariant manifold flows that do not rely on the construction of an invariant potential, and perhaps additionally on showing that such methods are sufficiently expressive to learn over open manifolds. Our models also require a fair bit of tuning to achieve results as strong as we demonstrate. Finally, we note that our theory and learning algorithm are too abstract for us to be sure of the future societal impacts. Still, we advance the field of deep generative models, which is known to have potential for negative impacts through malicious generation of fake images and text. Nevertheless, we do not expect this work to have negative effects in this area, as our applications are not in this domain.




\section*{Acknowledgements}

We would like to thank Facebook AI for funding equipment that made
this work possible. In addition, we thank the National Science Foundation for awarding Prof. Christopher de Sa a grant that helps fund this research effort (NSF IIS-2008102) and for supporting Aaron Lou with a graduate student fellowship. We would also like to acknowledge Jonas K\"{o}hler and Denis Boyda for their useful insights.

\bibliography{neqmode}
\bibliographystyle{plainnat}

\newpage

{\LARGE\textbf{Appendix}}

\appendix
\section{Proof of Theorems}\label{appendix:proofs}

In this section, we restate and prove the theorems in Section \ref{sec:theory_main}. These give the theoretical foundations that we use to build our models. Prior work \citep{wasserman1969equivariant, field1980equivariant} addresses some of the results we formalize below.

\subsection{Proof of Theorem 1}

\begin{thmn}[\ref{thm:1}]
Let $(\M, h)$ be a Riemannian manifold and $G$ be its group of isometries (or an isometry subgroup). If $\Phi: \M \to \R$ is a smooth $G$-invariant function, then the following diagram commutes for any $g \in G$: 

\begin{center}
    \begin{tikzcd}
        \M \arrow[r, "L_g"] \arrow[d, "\nabla \Phi"]& \M \arrow[d, "\nabla \Phi"]\\
        T\M \arrow[r, "DL_g"] & T\M
    \end{tikzcd}
\end{center}

or $\nabla_{L_g u} \Phi = D_u L_g(\nabla_u \Phi)$. This is condition is also tight in the sense that it only occurs if $G$ is the group of isometries.
\end{thmn}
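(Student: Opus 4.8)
The plan is to establish the commuting diagram directly from the defining property of the Riemannian gradient together with the fact that $L_g$ is an isometry. Recall that for any smooth function $\Phi : \M \to \R$, the gradient $\nabla_u \Phi \in T_u\M$ is characterized by $h_u(\nabla_u \Phi, v) = D_u\Phi(v)$ for all $v \in T_u\M$. First I would fix $g \in G$ and a point $u \in \M$, and aim to verify that the vector $D_u L_g(\nabla_u \Phi) \in T_{L_g u}\M$ satisfies the characterizing property of $\nabla_{L_g u}\Phi$. To do this, take an arbitrary $w \in T_{L_g u}\M$ and write $w = D_u L_g(v)$ for the unique $v \in T_u\M$ (uniqueness/surjectivity because $L_g$ is a diffeomorphism, so $D_u L_g$ is a linear isomorphism). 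Then compute $h_{L_g u}(D_u L_g(\nabla_u\Phi), w) = h_{L_g u}(D_u L_g(\nabla_u\Phi), D_u L_g(v))$, which by the isometry property of $L_g$ equals $h_u(\nabla_u \Phi, v) = D_u\Phi(v)$. On the other hand, $G$-invariance of $\Phi$ means $\Phi \circ L_g = \Phi$, so by the chain rule $D_u\Phi(v) = D_{L_g u}\Phi(D_u L_g(v)) = D_{L_g u}\Phi(w)$. Combining, $h_{L_g u}(D_u L_g(\nabla_u\Phi), w) = D_{L_g u}\Phi(w)$ for all $w$, which is exactly the defining equation for $\nabla_{L_g u}\Phi$; by uniqueness of the gradient, $D_u L_g(\nabla_u\Phi) = \nabla_{L_g u}\Phi$. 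This is the content of the commuting diagram, and comparing with the definition of a $G$-equivariant vector field (with $L_g$ as the action on both copies of $\M$ and $DL_g$ the induced action on $T\M$) shows $\nabla\Phi$ is $G$-equivariant.

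For the tightness claim, the statement to unpack is that this equivariance phenomenon forces $G$ to consist of isometries. I would argue contrapositively: suppose $g$ is a diffeomorphism that is not an isometry, so there exists a point $u$ and tangent vectors at which $h_u(D\cdot, D\cdot) \neq h(\cdot,\cdot)$; I would then exhibit a $G$-invariant (in the weaker sense of merely $g$-invariant) smooth $\Phi$ for which $D_u L_g(\nabla_u\Phi) \neq \nabla_{L_g u}\Phi$. Concretely, the discrepancy between $h_{L_g u}(D_u L_g(\cdot), D_u L_g(\cdot))$ and $h_u(\cdot,\cdot)$ as bilinear forms on $T_u\M$ is governed by a self-adjoint operator $A_u$ (the "distortion" of $L_g$); one checks that $D_u L_g(\nabla_u\Phi) = \nabla_{L_g u}\Phi$ holds for all invariant $\Phi$ exactly when $A_u = \mathrm{id}$ for all $u$, i.e. when $L_g$ is an isometry. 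Producing a single invariant $\Phi$ witnessing the failure at one bad point suffices; a bump function supported near the $g$-orbit of $u$, suitably averaged over the (finite or compact) relevant orbit to make it genuinely invariant, will do — this is where a little care is needed, and I expect this tightness direction to be the main obstacle, since making "invariant" precise for a single non-isometry $g$ and engineering the test function requires either a compactness/properness hypothesis on the group or a local argument that avoids it.

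I expect the forward direction to be essentially a two-line diagram chase once the gradient's defining property is written down; the only subtlety there is making sure the action on $T\M$ in the statement of $G$-equivariance is indeed $DL_g$ (the differential), which matches the paper's definition of an equivariant vector field with $g_x = L_g$ and the induced tangent map. The tightness direction is softer and more delicate, and I would present it at the level of "the obstruction is the metric distortion operator, which vanishes iff $L_g$ is an isometry," constructing an explicit counterexample potential to show necessity, rather than grinding through an optimal construction.
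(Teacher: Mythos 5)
Your forward direction is correct and is essentially the paper's own argument: the paper packages the same two ingredients (the chain rule applied to $\Phi = \Phi\circ L_g$ and metric-compatibility of $D_uL_g$, written there as $(D_uL_g)^\top D_uL_g = I$) into an adjoint identity, whereas you verify the defining property of $\nabla_{L_gu}\Phi$ against all test vectors $w = D_uL_g(v)$ — the same computation in a slightly different order. Your sketch of the tightness claim is in fact more careful than the paper's, which disposes of it in one sentence by noting the adjoint must equal the inverse; both hinge on the same observation that the obstruction is exactly the metric distortion of $L_g$.
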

\begin{proof}
    We first recall the Riemannian gradient chain rule:
    \begin{equation*}
        \nabla_u (\Phi \circ L_g) = (D_u L_g)^\top (\nabla_{L_g u} \Phi)
    \end{equation*}
    where $(D_u L_g)^\top : T_{L_gu}\M \to T_u\M$ is the ``adjoint" given by \begin{equation*}
        h\left(D_u L_g(v), w\right) = h\left(v, (D_uL_g)^\top (w)\right).
    \end{equation*} 
    Since $L_g$ is an isometry, we also have 
    \begin{equation*}
        h(x, y) = h\left(D_uL_g(x), D_uL_g(y)\right).
    \end{equation*}
    Combining the above two equations gives \begin{equation*}
        h(x, y) = h(D_uL_g(x), D_uL_g(y)) = h\left(x, (D_uL_g)^\top\left(D_uL_g(y)\right)\right), 
    \end{equation*}
    which implies for all $y$, \begin{equation*}
        h\left(x, y - (D_uL_g)^\top(D_uL_g(y))\right) = 0.
    \end{equation*}
    Since $h$ is a Riemannian metric (even pseudo-metric works due to non-degeneracy), we must have that $(D_uL_g)^\top \circ (D_uL_g) = I$. 
    
    To complete the proof, we recall that $\Phi = \Phi \circ L_g$, and this combined with chain rule gives
    \begin{equation*}
        \nabla_u \Phi = \nabla_u (\Phi \circ L_g) = (D_u L_g)^\top (\nabla_{L_g u} \Phi).
    \end{equation*}
    Now applying $D_uL_g$ on both sides gives
    \begin{equation*}
        \nabla_{L_gu} \Phi = D_uL_g \nabla_u \Phi
    \end{equation*}
    which is exactly what we want to show.
    
    We see that this is an ``only if" condition because we must necessarily get that the adjoint is the inverse, which implies that $L_g$ is an isometry.
\end{proof}

\subsection{Proof of Theorem 2}



\begin{thmn}[\ref{thm:2}]
Let $(\M, h)$ be a Riemannian manifold, and $G$ be its isometry group (or one of its subgroups). Let $X$ be any time-dependent vector field on $\mathcal{M}$, and $F_{X, T}$ be the flow of $X$. Then $X$ is an $G$-equivariant vector field if and only if $F_{X,T}$ is a $G$-equivariant flow for any $T\in [0, +\infty)$.
\end{thmn}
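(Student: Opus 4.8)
The plan is to prove both directions by exploiting the uniqueness of solutions to the manifold ODE defining the flow. Throughout, fix $g \in G$ and write $L = L_g$ for brevity, so that $L$ is an isometry and in particular a diffeomorphism.

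\textbf{($\Rightarrow$) Equivariant vector field implies equivariant flow.} Suppose $X$ is $G$-equivariant, i.e. $X(Lm, t) = (D_m L)\, X(m, t)$ for all $(m,t)$. Fix a base point $z \in \M$ and let $\gamma(t) = F_{X,t}(z)$ be the integral curve with $\gamma(0) = z$, i.e. $\gamma'(t) = X(\gamma(t), t)$. Consider the curve $\sigma(t) := L(\gamma(t))$. I would compute its derivative via the chain rule for the differential: $\sigma'(t) = (D_{\gamma(t)} L)\, \gamma'(t) = (D_{\gamma(t)} L)\, X(\gamma(t), t) = X(L(\gamma(t)), t) = X(\sigma(t), t)$, where the third equality is exactly $G$-equivariance of $X$. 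Thus $\sigma$ is an integral curve of $X$ with $\sigma(0) = L(z)$. But $t \mapsto F_{X,t}(L(z))$ is also such an integral curve, so by uniqueness of solutions to the ODE (on the maximal interval of existence) we get $L(F_{X,t}(z)) = F_{X,t}(L(z))$ for all $t$ and all $z$, which is precisely $L \circ F_{X,T} = F_{X,T} \circ L$, i.e. $F_{X,T}$ is $G$-equivariant.

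\textbf{($\Leftarrow$) Equivariant flow implies equivariant vector field.} Suppose $F_{X,T}$ is $G$-equivariant for every $T \in [0,\infty)$, so $L \circ F_{X,T} = F_{X,T} \circ L$. I would recover $X$ from the flow by differentiating in $T$ at a convenient time. For a fixed $m \in \M$, apply both sides to $m$ and differentiate at $T = t_0$: on one hand $\frac{d}{dT}\big|_{T=t_0} L(F_{X,T}(m)) = (D_{F_{X,t_0}(m)} L)\, \frac{d}{dT}\big|_{T=t_0} F_{X,T}(m) = (D_{F_{X,t_0}(m)} L)\, X(F_{X,t_0}(m), t_0)$; on the other hand $\frac{d}{dT}\big|_{T=t_0} F_{X,T}(L(m)) = X(F_{X,t_0}(L(m)), t_0) = X(L(F_{X,t_0}(m)), t_0)$, using the equivariance again in the last step. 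Writing $p = F_{X,t_0}(m)$ and noting that as $m$ and $t_0$ range over $\M \times [0,\infty)$ the point $p$ ranges over all of $\M$ (take $t_0$ small, or use $F_{X,0} = \id$ so that at $t_0 = 0$ we already have $p = m$ arbitrary), we conclude $X(L(p), t) = (D_p L)\, X(p, t)$ for all $(p,t)$, which is $G$-equivariance of $X$.

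\textbf{Main obstacle.} The substantive technical point is handling existence and uniqueness of the flow on a general (possibly non-compact) Riemannian manifold: the flow $F_{X,T}$ may only be defined on a maximal open subinterval of $[0,\infty)$ that depends on the starting point, so the statement ``$F_{X,T}$ is $G$-equivariant for any $T$'' must be read with the understanding that $L_g$ preserves maximal existence intervals — which itself follows from the uniqueness argument in the ($\Rightarrow$) direction, since $L_g$ maps integral curves to integral curves bijectively. I would address this by invoking the standard existence-uniqueness theorem for ODEs on manifolds (e.g. as in Lee's \emph{Introduction to Smooth Manifolds}) applied to the time-dependent field $X$, noting that $C^1$ (or Lipschitz) regularity of $X$ suffices, and remarking that because $L_g$ is a diffeomorphism it carries the maximal integral curve through $z$ to the maximal integral curve through $L_g(z)$, so the domains match up and all the identities above hold on the appropriate intervals. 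The differentiation step in the ($\Leftarrow$) direction also tacitly uses smooth dependence of the flow on time, which is part of the same standard package.
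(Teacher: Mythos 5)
Your proof is correct and takes essentially the same route as the paper's: your forward direction is the standard uniqueness-of-integral-curves argument, which is exactly the content of the lemma the paper cites (Lee's Corollary 9.14 on flows of pushforward vector fields, applied after checking $(L_g)_* X = X$), and your backward direction is the same chain-rule differentiation of the identity $L_g \circ F_{X,T} = F_{X,T} \circ L_g$. Your remarks on maximal existence intervals and on why $p = F_{X,t_0}(m)$ sweeps out the manifold are, if anything, more careful than the paper's treatment.
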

\begin{proof}
    $\boldsymbol G$\textbf{-equivariant} $\boldsymbol{X\Rightarrow}$ $\boldsymbol G$\textbf{-equivariant} $\boldsymbol{F_{X,T}}$.
    We invoke the following lemma from \citet[Corollary 9.14]{lee2013introduction}:
    
    \begin{lemma}\label{lem:1}
    Let $F : \M \rightarrow \mathcal{N}$ be a diffeomorphism. If $X$ is a smooth vector field over $\M$ and $\theta$ is the flow of X, then the flow of $F_{*}X$ ($F_*$ is another notation for the differential of $F$) is $\eta_t = F \circ \theta_t \circ F^{-1}$, with domain $N_t = F(M_t)$ for each $t \in \mathbb{R}$.
    \end{lemma}
    Examine $L_g$ and its action on $X$. Since $X$ is $G$-equivariant, we have for any $(x, t)\in \mathcal{M}\times [0, +\infty)$,  \begin{equation*}
        ((L_g)_* X)(x, t) = (D_{L_g^{-1}(x)}L_g)X(L_g^{-1}(x), t) = X(L_g \circ L_g^{-1}(x), t) = X(x, t)
    \end{equation*} 
    so it follows that $(L_g)_* X = X$. Applying the lemma above, we get:
    \[
        F_{(L_g)_*X, T} = L_g \circ F_{X, T} \circ L_g^{-1}
    \]
    
    and, by simplifying, we get that $F_{X, T} \circ L_g = L_g \circ F_{X, T}$, as desired.

    $\boldsymbol G$\textbf{-equivariant} $\boldsymbol{X\Leftarrow}$ $\boldsymbol G$\textbf{-equivariant} $\boldsymbol{F_{X,T}}$. This direction follows from the chain rule. If $F_{X, T}$ is $G$-equivariant, then at all times we have:
    \begin{align*}
        (D_m L_g)\left(X(F_{X, t}(m), t\right) &= (D_m L_g)\left(\frac{d}{dt} F_{X, T}(m)\right)&& \text{(definition)}\\
        &= \frac{d}{dt} (L_g \circ F_{X, T})(m) && \text{(chain rule)}\\
        &= \frac{d}{dt} F_{X, T}(L_gm) && \text{(equivariance)}\\
        &= X(L_g(F_{X, t}(m)), t) && \text{(definition)}
    \end{align*}
    This concludes the proof of the backward direction.
\end{proof}

\subsection{Proof of Theorem 3}

\begin{thmn}[\ref{thm:3}] 
Let $(\M, h)$ be a Riemannian manifold, and $G$ be its isometry group (or one of its subgroups). If $\rho$ is a $G$-invariant density on $\M$, and $f$ is a $G$-equivariant diffeomorphism, then $\rho_f(x)$ is also $G$-invariant.
\end{thmn}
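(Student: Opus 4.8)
The plan is to unwind the Riemannian change-of-variables formula for $\rho_f$ and use both the $G$-invariance of $\rho$ and the $G$-equivariance of $f$ to show $\rho_f(L_g x) = \rho_f(x)$ for every $g \in G$. Recall that
\[
\rho_f(x) = \rho\!\left(f^{-1}(x)\right)\left|{\textstyle\det_h} D_x f^{-1}\right|,
\]
so I must control two factors at the point $L_g x$: the density factor $\rho(f^{-1}(L_g x))$ and the Jacobian factor $\left|\det_h D_{L_g x} f^{-1}\right|$. First I would observe that $G$-equivariance of $f$ (i.e. $L_g \circ f = f \circ L_g$) implies $G$-equivariance of $f^{-1}$, namely $f^{-1} \circ L_g = L_g \circ f^{-1}$; this is immediate by composing the equivariance relation with $f^{-1}$ on both sides. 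Hence $f^{-1}(L_g x) = L_g(f^{-1}(x))$, and then $G$-invariance of $\rho$ gives $\rho(f^{-1}(L_g x)) = \rho(L_g(f^{-1}(x))) = \rho(f^{-1}(x))$, matching the density factor at $x$.

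Next I would handle the Jacobian factor. Differentiating the equivariance relation $f^{-1} \circ L_g = L_g \circ f^{-1}$ at the point $x$ via the chain rule yields
\[
D_{L_g x} f^{-1} \circ D_x L_g = D_{f^{-1}(x)} L_g \circ D_x f^{-1}.
\]
Taking $\det_h$ of both sides and using multiplicativity of the determinant with respect to the induced volume forms, I get
\[
{\textstyle\det_h}\!\left(D_{L_g x} f^{-1}\right) \cdot {\textstyle\det_h}\!\left(D_x L_g\right) = {\textstyle\det_h}\!\left(D_{f^{-1}(x)} L_g\right) \cdot {\textstyle\det_h}\!\left(D_x f^{-1}\right).
\]
The crucial point is that $L_g$ is an isometry, so its differential at any point is a linear isometry between tangent spaces, and therefore has $\det_h$ of absolute value $1$ (this is exactly where we need $G$ to be an isometry subgroup, not merely a diffeomorphism subgroup). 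Taking absolute values, the two isometry Jacobian factors drop out and I conclude $\left|\det_h D_{L_g x} f^{-1}\right| = \left|\det_h D_x f^{-1}\right|$.

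Combining the two computations gives $\rho_f(L_g x) = \rho(f^{-1}(x)) \left|\det_h D_x f^{-1}\right| = \rho_f(x)$, which is the desired $G$-invariance. The main obstacle — really the only substantive point — is justifying that $\left|\det_h D_x L_g\right| = 1$ for an isometry and that the $\det_h$ factors multiply correctly across a composition of maps between different tangent spaces; this requires being careful that $\det_h$ is defined via the Riemannian volume forms on the relevant tangent spaces (so one is comparing the pullback of a volume form to the volume form), and that an orthogonal transformation of inner-product spaces preserves the induced volume element up to sign. Everything else is formal manipulation of the equivariance identities and the change-of-variables formula.
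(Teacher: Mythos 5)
Your proposal is correct and follows essentially the same route as the paper's proof: expand the change-of-variables formula at $L_g x$, use equivariance of $f^{-1}$ together with invariance of $\rho$ to handle the density factor, and use the chain rule plus $\left|\det_h D L_g\right| = 1$ for isometries to show the Jacobian factors agree. Your version is in fact a slightly cleaner presentation of the same argument than the paper's chain of rewrites.
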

\begin{proof}
We wish to show $\rho_f(x)$ is also $G$-invariant, i.e. $\rho_f(L_g x) = \rho_f(x)$ for all $g \in G, x \in \M$. 

We first recall the definition of $\rho_{f}$:
\begin{equation*}
    \rho_{f}(x)=\rho\left(f^{-1}(x)\right)\left|\det\frac{\partial f^{-1}(x)}{\partial x}\right|=\rho\left(f^{-1}(x)\right)\left|\det J_{f^{-1}}(x)\right|.
\end{equation*}

Since $f \in C^1 (\M,\M)$ is $G$-equivariant, we have $f \circ L_g = L_g \circ f$ for any $g \in G$. Also, since $\rho$ is $G$-invariant, we have $\rho \circ L_g = \rho$. Combining these properties, we see that:

\begin{align*}
    \rho_f (L_g x) &= \rho_f (L_g x) \frac{|\det J_{L_g}(x)|}{|\det J_{L_g}(x)|}=\frac{\rho_{R_{g^{-1}} \circ f}(x)}{|\det J_{L_g}(x)|} && \text{(expanding definition of $\rho_{f}$)}\\
    &=\frac{\rho_{f \circ R_{g^{-1}}}(x)}{|\det J_{L_g}(x)|}= \rho\left((L_g \circ f^{-1})(x)\right) \frac{|\det J_{L_g \circ f^{-1}}(x)|}{|\det J_{L_g}(x)|} && \text{(G-equivariance of }f) \\
    &= (\rho \circ L_g \circ f^{-1})(x) \frac{|\det J_{L_g} (f^{-1}(x)) J_{f^{-1}}(x)|}{|\det J_{L_g}(x)|} && \text{(expanding Jacobian)} \\
    &= (\rho \circ f^{-1})(x) \frac{|\det J_{L_g} (f^{-1}(x))| |\det J_{f^{-1}}(x)|}{|\det J_{L_g}(x)|}  && \text{(G-invariance of }\rho)\\
    &= \rho(f^{-1}(x)) |\det J_{f^{-1}}(x)| \cdot \frac{|\det J_{L_g} (f^{-1}(x))|}{|\det J_{L_g}(x)|} && \text{(rearrangement)}\\
    &= \rho_f(x) \cdot \frac{|\det J_{L_g} (f^{-1}(x))|}{|\det J_{L_g}(x)|} && \text{(expanding definition of $\rho_{f}$)}
\end{align*}

Now note that $G$ is contained in the isometry group, and thus $L_g$ is an isometry. This means $|\det J_{L_g}(x)|=1$ for any $x\in \mathcal{M}$, so the right-hand side above is simply $\rho_{f}(x)$, which proves the theorem.
\end{proof}

\subsection{Proof of Theorem 4}

\begin{thmn}[\ref{thm:4}]
Let $(\M, h)$ be a closed Riemannian manifold. Let $\pi$ be a smooth, non-vanishing distribution over $\M$, which will act as our target distribution. Let $\rho_t$ be a distribution over said manifold parameterized by a real time variable $t$, with $\rho_0$ acting as the initial distribution. Let $D_{KL}(\rho_t||\pi)$ denote the Kullback–Leibler divergence between distributions $\rho_t$ and $\pi$. If we choose a $g: \M \rightarrow \mathbb{R}$ such that
\[
g(x) = \log \left( \frac{\pi(x)}{\rho_t(x)} \right),
\]
and if $\rho_t$ evolves with $t$ as the distribution of a flow according to $g$, it follows that
\[
\frac{\partial}{\partial t} D_{KL} (\rho_t \| \pi) = - \int_{\mathcal{M}} \rho_t \exp(g) \| \nabla g \|^2\: dx
= - \int_\mathcal{M} \pi \| \nabla g \|^2\: dx
\]

implying convergence of $\rho_t$ to $\pi$ in $KL$. Moreover, the exact diffeomorphism that takes us from $\rho_0 \rightarrow \pi$ is as follows. Given some initial point $x \in \mathcal{M}$, let $u(t)$ be the solution to the initial value problem given by:
\[
\frac{du(t)}{dt} = \nabla g(t), \qquad u(0) = x
\]
The desired diffeomorphism maps $x$ to $\lim_{t \rightarrow \infty} u(t)$.
\end{thmn}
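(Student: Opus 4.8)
The plan is to compute the time derivative of the KL divergence directly from the continuity equation governing the flow, and then read off convergence. First I would set up the continuity (transport) equation: if $\rho_t$ is the density transported by the flow with velocity field $\nabla g$ (where $g = g(x,t) = \log(\pi/\rho_t)$), then on a closed manifold $\rho_t$ satisfies $\partial_t \rho_t = -\operatorname{div}(\rho_t \nabla g)$. This is the manifold analogue of the Liouville/instantaneous-change-of-variables formula underlying CNFs, which the paper already invokes via the Riemannian change of variables and the RCNF framework of \citet{mathieu2020riemannian}. With this in hand, I would write $D_{KL}(\rho_t\|\pi) = \int_\M \rho_t \log(\rho_t/\pi)\,dx = -\int_\M \rho_t\, g\, dx$ and differentiate under the integral sign.

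The key computation: $\frac{\partial}{\partial t} D_{KL}(\rho_t\|\pi) = -\int_\M (\partial_t \rho_t)\, g\, dx - \int_\M \rho_t\, (\partial_t g)\, dx$. For the second term, $\partial_t g = \partial_t(\log\pi - \log\rho_t) = -\partial_t \rho_t / \rho_t$, so $-\int_\M \rho_t\,\partial_t g\,dx = \int_\M \partial_t \rho_t\, dx = \frac{d}{dt}\int_\M \rho_t\,dx = 0$ since $\rho_t$ stays a probability density. For the first term, substitute the continuity equation and integrate by parts on the closed (boundaryless, compact) manifold: $-\int_\M (\partial_t\rho_t)\,g\,dx = \int_\M \operatorname{div}(\rho_t \nabla g)\, g\, dx = -\int_\M \rho_t \langle \nabla g, \nabla g\rangle\, dx = -\int_\M \rho_t \|\nabla g\|^2\,dx$, with no boundary contribution precisely because $\M$ is closed. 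Finally, $\rho_t = \pi \exp(-g)$ gives $\rho_t = \pi e^{-g}$; rewriting, $\rho_t \|\nabla g\|^2 = \pi e^{-g}\|\nabla g\|^2$. To match the stated form $-\int_\M \rho_t \exp(g)\|\nabla g\|^2\,dx$ I would note the sign convention on $g$ in their flow must be such that the velocity is $+\nabla g$ with $g=\log(\pi/\rho_t)$, so that $\pi = \rho_t e^{g}$ and hence $\pi\|\nabla g\|^2 = \rho_t e^{g}\|\nabla g\|^2$; this reconciles the two expressions in the theorem. Since the integrand $\pi\|\nabla g\|^2 \ge 0$, the KL divergence is non-increasing, and it is stationary only when $\nabla g \equiv 0$, i.e. $g$ constant, i.e. $\rho_t \propto \pi$, forcing $\rho_t = \pi$; standard arguments then give convergence $\rho_t \to \pi$ in KL as $t\to\infty$.

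For the final claim about the explicit diffeomorphism, I would observe that the map $x \mapsto u(t)$ defined by the IVP $\dot u(t) = \nabla g(t)$, $u(0)=x$ is exactly $F_{\nabla g, t}$, the flow of the (time-dependent) vector field $\nabla g$, which is a diffeomorphism of $\M$ for each finite $t$ by standard ODE theory on compact manifolds (completeness is automatic since $\M$ is closed). By construction $F_{\nabla g, t}$ pushes $\rho_0$ forward to $\rho_t$, and since $\rho_t \to \pi$, the limiting map $x \mapsto \lim_{t\to\infty} u(t)$ transports $\rho_0$ to $\pi$; I would remark that a careful treatment of the $t\to\infty$ limit (existence of the limiting map as a diffeomorphism) is where one must be slightly delicate, appealing to the exponential-type contraction of the KL functional along the flow.

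The main obstacle I anticipate is the rigorous justification of the continuity equation in the Riemannian setting together with differentiation under the integral sign and the integration-by-parts step — these require $\rho_t$ and $g$ to be sufficiently smooth and the divergence theorem on a closed manifold — and, secondarily, making precise the $t\to\infty$ limit defining the desired diffeomorphism; the algebraic manipulations reconciling $\rho_t\exp(g)$ with $\pi$ are routine once the sign conventions are pinned down.
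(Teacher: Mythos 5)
Your route is the same as the paper's: differentiate the KL divergence under the transport equation for $\rho_t$, integrate by parts using that $\M$ is closed and boundaryless, conclude monotone decay that is stationary only at $\rho_t=\pi$, and identify the desired diffeomorphism as the $t\to\infty$ limit of the flow maps of $\nabla g$. The one concrete mismatch is the dissipation identity: your computation yields $-\int_\M \rho_t\|\nabla g\|^2\,dx$, whereas the theorem asserts $-\int_\M \rho_t e^{g}\|\nabla g\|^2\,dx=-\int_\M \pi\|\nabla g\|^2\,dx$; these differ by the factor $e^{g}=\pi/\rho_t$, and your ``sign convention'' remark does not close this gap --- it only verifies that the two expressions appearing in the theorem statement are equal to each other, not that either equals what you derived. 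The discrepancy originates in the first line of the paper's own derivation, which takes $\partial_t D_{KL}(\rho_t\|\pi)=\int(\pi/\rho_t)\,\partial_t\rho_t\,dx$ rather than $\int\log(\rho_t/\pi)\,\partial_t\rho_t\,dx$ (the extra $+1$ term integrating to zero, as you correctly note); your version is the one that actually follows from $D_{KL}(\rho_t\|\pi)=\int_\M\rho_t\log(\rho_t/\pi)\,dx$ together with the continuity equation for velocity $+\nabla g$. Since both candidate integrands are nonnegative multiples of $\|\nabla g\|^2$ vanishing iff $\rho_t=\pi$, the monotonicity and convergence conclusions survive either way, but you should either prove the identity exactly as stated or note explicitly that it holds only up to this factor. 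Beyond that, you and the paper leave the same points informal: because $g$ depends on $t$ through $\rho_t$, the evolution is a nonlinear PDE (not an ODE with a fixed field), so well-posedness needs more than ``standard ODE existence''; and the existence and smoothness of the limiting map $x\mapsto\lim_{t\to\infty}u(t)$ requires more than decay of KL --- the paper supplies a spectral/heat-equation argument for $\pi$ uniform and asserts the general case is analogous, a step you flag as delicate but do not carry out.
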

\begin{proof}
\textbf{1) Derivative of $D_{KL}(\rho_t || \pi)$.} We start by noting the following: by the Fokker-Planck equation, $\rho_t$ evolving as a flow according to $g$ is equivalent to
\[
\frac{\partial \rho_t}{\partial t} = \nabla \cdot (\rho_t \nabla g).
\]

Please observe that since $\rho_t$ is defined as being a solution to the Fokker-Planck equation \cite{Risken1984TheFE}, $\rho_t$ will be a family of densities. In particular, the Fokker-Planck equation describes the time evolution of a probability density function.

Keeping Fokker-Planck in mind, we obtain the following expression for the time derivative of $D_{KL}(\rho_t || \pi)$:
\begin{equation*}
\begin{split}
    \frac{\partial}{\partial t} D_{KL}(\rho_t || \pi) &= \int \frac{\pi}{\rho_t} \frac{\partial \rho_t}{\partial t} \: dx \\
    & = \int \frac{\pi}{\rho_t} \nabla \cdot (\rho_t \nabla g) \: dx \\
    & = \int \left(\nabla \cdot 
    \left(\frac{\pi}{\rho_t} (\rho_t \nabla g) \right) - (\rho_t \nabla g) \cdot \nabla \cdot \frac{\pi}{\rho_t}\right) \: dx \\
    & = \int \left(\nabla \cdot (\pi \nabla g) - (\rho_t \nabla g) \cdot \nabla \frac{\pi}{\rho_t} \right) \: dx \\
    & = -\int (\rho_t \nabla g) \cdot \nabla \frac{\pi}{\rho_t} \: dx,
\end{split}
\end{equation*}

where the final equality follows from the divergence theorem, since the integral of the divergence over a closed manifold is $0$. Now if we choose $g$ such that:
\[
g(x) = \log \left(\frac{\pi(x)}{\rho_t(x)} \right).
\]
Then we have:
\begin{equation*}
\begin{split}
    \frac{\partial}{\partial t} D_{KL}(\rho_t || \pi) &= - \int (\rho_t \nabla g) \cdot \nabla \exp(g) \: dx \\
    & = - \int \rho_t \exp(g) ||\nabla g||^2 \: dx,
\end{split}
\end{equation*}

\textbf{2) Proof of convergence.} Consider:

\[
\frac{\partial \rho_t}{\partial t} = \nabla \cdot (\rho_t \nabla g)
\]

where $g$ is defined as above. Note by standard existence and uniqueness results for differential equations on manifolds (for example, see \citet{docarmo}) we have the existence of a solution, $\rho_t$ for all time $t > 0$, to this differential equation with initial value $\rho_0$.

Now note $g$, expressed as a function of $\rho_t$, is an invariant potential, the flow of which maps $\rho_0$ to $\lim_{t \rightarrow \infty} \rho_t$. By the result above, we know the right-hand-side of the equation:

\begin{equation*}
\begin{split}
    \frac{\partial}{\partial t} D_{KL}(\rho_t || \pi)
    & = - \int \rho_t \exp(g) ||\nabla g||^2 \: dx,
\end{split}
\end{equation*}

must approach $0$ (since the $KL$-divergence cannot continue decreasing at any constant rate, as it must be non-negative). The only way the right-hand-side can be $0$ is when $\nabla g = 0$, which can occur only when $\rho_t = \pi$. This concludes the proof of convergence of $\rho_t \rightarrow \pi$ in $KL$. \\

\textbf{3) Showing diffeomorphism $\rho_0 \rightarrow \pi$ is well-defined.} The exact diffeomorphism from $\rho \rightarrow \pi$ is as follows. Given some initial point $x \in \text{supp}(\rho)$, let $u(t)$ be the solution to the initial value problem given by:
\[
\frac{du(t)}{dt} = \nabla g(t), \qquad u(0) = x
\]
$g$ is defined as before. Note $u(t)$ exists and is unique by standard differential equation uniqueness and existence results \cite{docarmo}. We claim the desired diffeomorphism maps $x$ to $\lim_{t \rightarrow \infty} u(t)$. All that remains is to show (a) convergence to a smooth function at the limit and (b) that equivariance of the diffeomorphism does not break at the limit. We begin by showing this for $\pi$ uniform and finish the proof by extending to $\pi$ general. \\

\textbf{$\pi$ uniform.} For simplicity, we first consider the case where $\pi$ is the uniform (Haar) measure. In this case, the differential equation that $\rho$ obeys reduces to the heat equation, namely:

\[
\frac{\partial \rho}{\partial t} = \Delta \rho
\]

(a) Please note the following: an important fact that makes harmonic analysis on compact manifolds possible is that the spectrum of the Laplacian on any compact manifold must be discrete, i.e. its eigenvalues are countable and tend to infinity \cite{Donnelly2006EigenfunctionsOT}. Also, its eigenvectors must be smooth (intuitively this says harmonic analysis is ``nice" on manifolds in the same way that Fourier analysis is nice on the unit circle).

Note also that the Laplacian is Hermitian and negative semidefinite, and moreover that the only eigenfunction for eigenvalue $0$ is the constant vector.

Both facts above imply the solution to the above differential equation will just be the sum of several exponentially decaying (in $t$) terms and a constant term, given by the harmonic expansion of $\rho_0$.

From here, it follows that the $L^2$ distance between $\rho_t$ and the constant potential is just the sum of squares of the coefficients in front of those terms (this is simply the manifold analog of Parseval's theorem). However, all of those terms are decaying exponentially, so it follows that $\rho_t$ converges in the $L^2$ norm to the constant potential\footnote{Please note that if we wanted some other type of convergence, e.g. pointwise convergence, we could get this as well using a similar argument, by analyzing the decay properties of the eigenvalues/eigenvectors of the Laplacian.}.

(b) Additionally, note that if the initialization $\rho_0$ is $G$-invariant, then it is fairly easy to see that all the terms in its harmonic expansion must also be $G$-invariant. As a result, $\rho$ must be $G$-invariant at all times, and must remain $G$-invariant in the limit. Similarly, its flow must be $G$-equivariant.

\textbf{$\pi$ general.} We have shown the desired properties for the case of $\pi$ uniform. However, the general case is entirely analogous, as the modified operator (involving $\pi$) has all the same relevant properties as the Laplacian (it is just generally better known that the Laplacian has these properties).

\end{proof}

\subsection{Conjugation by \texorpdfstring{$SU(n)$}{SU(n)} is an Isometry}\label{appendix:conjugation_isometry}

We now prove a lemma that shows that the group action of conjugation by $SU(n)$ is an isometry subgroup. This implies that Theorems \ref{thm:1} through \ref{thm:3} above can be specialized to the setting of $SU(n)$. 

\begin{lemma}
Let $G$ be the group action of conjugation by $SU(n)$, and let each $L_g$ represent the corresponding action of conjugation by $g \in SU(n)$. Then $G$ is an isometry subgroup.
\end{lemma}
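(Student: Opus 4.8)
The plan is to realize conjugation as the restriction of an ambient linear isometry, and then to observe that the set of all such maps is automatically a subgroup of the full isometry group. First I would fix the Riemannian metric on $SU(n)$ to be the standard bi-invariant one, which is the same as the metric induced by the embedding $SU(n) \hookrightarrow \CC^{n\times n}$ with $\CC^{n\times n} \cong \R^{2n^2}$ carrying the Frobenius (Hilbert--Schmidt) inner product $\langle A, B\rangle = \mathrm{Re}\,\mathrm{tr}(A^* B)$ (equivalently, the bi-invariant metric determined by $\langle X, Y\rangle = \mathrm{Re}\,\mathrm{tr}(X^*Y)$ on $\mathfrak{su}(n)$); this is also the choice consistent with the Haar volume element quoted in Section~\ref{sec:sun_model}. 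For $g \in SU(n)$, I write $L_g\colon x \mapsto gxg^{-1}$ for the conjugation action on $SU(n)$ and let $\widetilde L_g\colon \CC^{n\times n}\to\CC^{n\times n}$, $A\mapsto gAg^{-1}$, be its obvious $\R$-linear extension, so that $L_g = \widetilde L_g|_{SU(n)}$.

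Next I would verify the conditions needed for each $L_g$ to be an isometry of $(SU(n),h)$. (i) $L_g$ maps $SU(n)$ into itself and is a diffeomorphism: $gxg^{-1}$ is unitary with determinant $1$, and $L_g$ is an inner automorphism with smooth inverse $L_{g^{-1}}$. (ii) $\widetilde L_g$ preserves the Frobenius inner product: using $g^* = g^{-1}$ and cyclicity of the trace, $\mathrm{tr}\big((gAg^{-1})^*(gBg^{-1})\big) = \mathrm{tr}(g A^* g^{-1} g B g^{-1}) = \mathrm{tr}(gA^*Bg^{-1}) = \mathrm{tr}(A^*B)$. (iii) Because $\widetilde L_g$ is linear, its differential at any $x\in SU(n)$ is $\widetilde L_g$ restricted to $T_xSU(n)$, mapping it to $T_{L_g(x)}SU(n)$; since the induced metric on each tangent space is just the restriction of the Frobenius inner product, (ii) gives $h\big(D_xL_g(u), D_xL_g(v)\big) = h(u,v)$ for all $u,v\in T_xSU(n)$. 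Hence every $L_g$ is an isometry of $SU(n)$. (Alternatively, one can bypass the embedding: for a bi-invariant metric left and right translations are isometries, and $L_g$ is the composite of left translation by $g$ with right translation by $g^{-1}$.)

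Finally I would note that $g \mapsto L_g$ is a group homomorphism from $SU(n)$ into $\mathrm{Isom}(SU(n))$, since $L_g\circ L_h = L_{gh}$ and $L_e = \id$. Therefore its image $G = \{L_g : g\in SU(n)\}$ is a subgroup of the isometry group of $SU(n)$, i.e. $G$ is an isometry subgroup, which is exactly the claim; in particular Theorems~\ref{thm:1}--\ref{thm:3} apply with this $G$.

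There is no serious obstacle here. The one point that needs a little care is step (iii): confirming that the differential of the conjugation map on the submanifold $SU(n)$ genuinely coincides with the ambient linear map restricted to the relevant tangent space, so that preservation of the ambient Frobenius inner product transfers cleanly to preservation of the induced Riemannian metric. Everything else reduces to the short trace computation in (ii) together with the elementary fact that the homomorphic image of a group is a subgroup.
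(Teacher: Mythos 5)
Your proof is correct and follows essentially the same route as the paper's: both arguments reduce to the fact that conjugation by a unitary matrix preserves the Hilbert--Schmidt inner product on $\CC^{n\times n}$ (the paper phrases this as unitarity of $R^{-T}\otimes R$ acting on vectorized matrices and then checks an orthonormal frame, while you compute directly with cyclicity of the trace). Your version is in fact slightly more complete, since you also spell out why the differential of $L_g$ is the restriction of the ambient linear map and why the image of $g\mapsto L_g$ is a subgroup of the isometry group.
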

\begin{proof}
We first show that the matrix conjugation action of $SU(n)$ is unitary. For $R, X \in SU(n)$, note that the action of conjugation is given by $\mrm{vec}(R X R^{-1}) = (R^{-T} \otimes R) \mrm{vec}(X)$. We have that $R^{-T} \otimes R$ is unitary because:
\begin{align*}
    &(R^{-T} \otimes R)^* (R^{-T} \otimes R) \\
    &  = (\overline{R^{-1}} \otimes R^*) (R^{-T} \otimes R) && \text{(conjugate transposes distribute over $\otimes$)}\\    
    & = (\overline{R^{-1}} R^{-T}) \otimes (R^{*} R) && \text{(mixed-product property of $\otimes$)}\\
    & = (R^T R^{-T}) \otimes (I) = (I) \otimes (I) = I_{n^2 \times n^2} && \text{(simplification)}
\end{align*}

Now choose an orthonormal frame $X_1, \ldots, X_m$ of $T\M$. Note that $T\M$ locally consists of $SU(n)$ shifts of the algebra, which itself consists of traceless skew-Hermitian matrices \cite{Gallier2020DifferentialGA}. We show $G$ is an isometry subgroup by noting that when it acts on the frame, the resulting frame is orthonormal. Let $g \in G$, and consider the result of action of $g$ on the frame, namely $L_gX_1, \ldots, L_gX_m$. Then we have:
\[
(L_gX_i)^* (L_gX_j) = X^*_i R^*_g L_g X_j = X^*_i X_j.
\]
Note for $i \neq j$, we have $X^*_i X_j = 0$ and for $i = j$ we see $X^*_i X_i = 1$. Hence the resulting frame is orthonormal and $G$ is an isometry subgroup.
\end{proof}

\section{Manifold Details for the Special Unitary Group \texorpdfstring{$SU(n)$}{SU(n)}}\label{appendix:sun}

In this section, we give a basic introduction to the special unitary group $SU(n)$ and relevant properties.

\textbf{Definition.} The special unitary group $SU(n)$ consists of all $n$-by-$n$ unitary matrices $U$ (i.e. $U^{*}U=UU^{*}=1$ for $U^{*}$ the conjugate transpose of $U$) that have determinant $\det(U)=1$.

Note that $SU(n)$ is a smooth manifold; in particular, it has Lie structure \cite{Gallier2020DifferentialGA}. Moreover, the tangent space at the identity (i.e. the Lie algebra) consists of traceless skew-Hermitian matrices \cite{Gallier2020DifferentialGA}. The Riemannian metric is $\mathrm{tr}(A^\top B)$.

\subsection{Haar Measure on \texorpdfstring{$SU(n)$}{SU(n)}}

\textbf{Haar Measure.} Haar measures are generic constructs of measures on topological groups $G$ that are invariant under group operation. For example, the Lie group $G=SU(n)$ has Haar measure $\mu_{H}:SU(n)\rightarrow\mathbb{R}$, which is defined as the unique measure such that for any $U\in SU(n)$, we have \begin{equation*}
    \mu_{H}(VU)=\mu_{H}(UW)=\mu_{H}(U)
\end{equation*}
for all $V, W\in SU(n)$ and $\mu_H(G) = 1$.

A topological group $G$ together with its unique Haar measure defines a probability space on the group. This gives one natural way of defining probability distributions on the group, explaining its importance in our construction of probability distributions on Lie groups, specifically $SU(n)$.

To make the above Haar measure definition more concrete, we note from \citet[Proposition 18.4]{bump2004lie} that we can transform an integral over $SU(n)$ with respect to the Haar measure into integrating over the corresponding diagonal matrices under eigendecomposition:  \begin{equation*}
    \int\limits_{SU(n)}fd\mu_{H}=\frac{1}{n!}\int\limits_{T}f(\mathrm{diag}(\lambda_{1}, \ldots, \lambda_{n}))\prod\limits_{i<j}|\lambda_{i}-\lambda_{j}|d\lambda.
\end{equation*}
Thus, we can think of the Haar measure as inducing the change of variables with volume element \begin{equation*}
    \mrm{Haar}(x) = \prod_{i < j} |\lambda_i(x) - \lambda_j(x)|^2.
\end{equation*}
To sample uniformly from the Haar measure, we just need to ensure that we are sampling each $x\in SU(n)$ with probability proportional to $\text{Haar}(x)$.

\textbf{Sampling from the Haar Prior.} We use Algorithm~\ref{alg:haar_sample} \citep{mezzadri2006generate} for generating a sample uniformly from the Haar prior on $SU(n)$:

\begin{algorithm}[H]
    Sample $Z\in \mathbb{C}^{n\times n}$ where each entry $Z_{ij} =   Z_{ij}^{(1)} + iZ_{ij}^{(2)}$ for independent random variables $Z_{ij}^{(1)}, Z_{ij}^{(2)} \sim \mathcal{N}(0,1/2)$.
    
    Let $Z=QR$ be the QR Factorization of $Z$.
    
    Let $\Lambda=\mathrm{diag}(\frac{R_{11}}{|R_{11}|}, \ldots, \frac{R_{nn}}{|R_{nn}|})$.
    
    Output $Q'=Q\Lambda$ as distributed with Haar measure.
    
    \caption{Sampling from the Haar Prior on $SU(n)$}
    \label{alg:haar_sample}
\end{algorithm}

\subsection{Eigendecomposition on \texorpdfstring{$SU(n)$}{SU(n)}}


One main step in the invariant potential computation for $SU(n)$ is to derive formulas for the eigendecomposition of $U\in SU(n)$ as well as formulas for double differentiation through the eigendecomposition (recall that we must differentiate the $SU(n)$-invariant potential $\Phi$ to get $SU(n)$-equivariant vector field $\nabla \Phi$ and another time to produce gradients to optimize this). During the initial submission of our paper, a general implementation of this for complex matrices did not exist. Furthermore, while various specialized numerical techniques have been developed \cite{Wang2019BackpropagationFriendlyE} to perform this computation, the implementation of these was unnecessary for our test cases of $n=2, 3$. Instead, we derived explicit formulas for the eigenvalues based on finding roots of the characteristic polynomials (given by root formulas for quadratic/cubic equations). Note that this procedure does not scale to higher dimensions since there does not exist a closed form solution for $n > 4$ \cite{abel1824memoire}. However, concurrently released versions of PyTorch \cite{paszke2019pytorch} introduced twice differentiable complex eigendecomposition, allowing one to easily extend our methods to higher dimensions.

\subsubsection{Explicit Formula for \texorpdfstring{$SU(2)$}{SU(2)}}

We now derive an explicit eigenvalue formula for the $U\in SU(2)$ case. Let us denote $U=\begin{bmatrix}
    a+bi & -c+di \\
    c+di & a-bi
\end{bmatrix}$ for $a,b,c,d \in \mathbb{R}$ such that $a^2+b^2+c^2+d^2=1$ as an element of $SU(2)$; then the characteristic polynomial of this matrix is given by \begin{equation*}
    \det(\lambda I-U)=(\lambda-(a+bi))(\lambda-(a-bi))+(c+di)(c-di)=(a-\lambda)^{2}+b^{2}+c^{2}+d^{2}=\lambda^{2}-2a\lambda+1
\end{equation*}
and thus its eigenvalues are given by \begin{equation*}
    \lambda_{1}=a+i\sqrt{1-a^{2}}=a+i\sqrt{b^{2}+c^{2}+d^{2}}
\end{equation*}
\begin{equation*}
    \lambda_{2}=a-i\sqrt{1-a^{2}}=a-i\sqrt{b^{2}+c^{2}+d^{2}}
\end{equation*}

\textbf{Remark.} We note that there is a natural isomorphism $\phi:S^{3}\rightarrow SU(2)$, given by \begin{equation*}
    \phi(a, b, c, d)=\begin{bmatrix}
        a+bi & -c+di \\
        c+di & a-bi
    \end{bmatrix}
\end{equation*}

We can exploit this isomorphism by learning a flow over $S^3$ with a regular manifold flow like NMODE \cite{lou2020neural} and mapping it to a flow over $SU(2)$. This is also an acceptable way to obtain stable density learning over $SU(2)$.

\subsubsection{Explicit Formula for \texorpdfstring{$SU(3)$}{SU(3)}}

We now derive an explicit eigenvalue formula for the $U\in SU(3)$ case. For the case of $U\in SU(3)$, we can compute the characteristic polynomial as \begin{align*}
    \det(\lambda I-U)&=\det\left(\begin{bmatrix}
        \lambda-U_{11} & -U_{12} & -U_{13} \\
        -U_{21} & \lambda-U_{22} & -U_{23} \\
        -U_{31} & -U_{32} & \lambda-U_{33}
    \end{bmatrix}\right) \\
    &=\lambda^{3}+c_{2}\lambda^{2}+c_{1}\lambda+c_{0}
\end{align*}
where \begin{equation*}
    c_{2}=-(U_{11}+U_{22}+U_{33})
\end{equation*}
\begin{equation*}
    c_{1}=U_{11}U_{22}+U_{22}U_{33}+U_{33}U_{11}-U_{12}U_{21}-U_{23}U_{32}-U_{13}U_{31}
\end{equation*}
\begin{equation*}
    c_{0}=-(U_{12}U_{23}U_{31}+U_{13}U_{21}U_{32}+U_{11}U_{22}U_{33}-U_{12}U_{21}U_{33}-U_{13}U_{31}U_{22}-U_{23}U_{32}U_{11})
\end{equation*}

Now to solve the equation \begin{equation*}
    \lambda^{3}+c_{2}\lambda^{2}+c_{1}\lambda+c_{0}=0
\end{equation*}
we first transform it into a depressed cubic \begin{equation*}
    t^{3}+pt+q=0
\end{equation*}
where we make the transformation \begin{equation*}
    t=x+\frac{c_{2}}{3}
\end{equation*}
\begin{equation*}
    p=\frac{3c_{1}-c_{2}^{2}}{3}
\end{equation*}
\begin{equation*}
    q=\frac{2c_{2}^{3}-9c_{2}c_{1}+27c_{0}}{27}
\end{equation*}
Now from Cardano's formula, we have the cubic roots of the depressed cubic given by \begin{equation*}
    \lambda_{1, 2,3}=\sqrt[3]{-\frac{q}{2}+\sqrt{\frac{q^{2}}{4}+\frac{p^{3}}{27}}}+\sqrt[3]{-\frac{q}{2}-\sqrt{\frac{q^{2}}{4}+\frac{p^{3}}{27}}}
\end{equation*}
where the two cubic roots in the above equation are picked such that they multiply to $-\frac{p}{3}$.

\section{Experimental Details for Learning Equivariant Flows on \texorpdfstring{$SU(n)$}{SU(n)}}

This section presents some additional details regarding the experiments that learn invariant densities on $SU(n)$ in Section \ref{sec:experiments}.

For the evaluation, we found that ESS (effective sample size) was not a good metric to compare learned densities in this context. In particular, we noticed that several degenerate (mode collapsed) densities were able to attain near perfect ESS while completely failing on matching the target distribution geometry. Given that \citet{boyda2020sampling} did not release code and reported ESS only for certain test cases, we decided to exclude ESS as a metric from our paper and instead relied directly on distribution geometry visualization.

\subsection{Training Details}\label{appendix:training}

Our DeepSet network \citep{zaheer2017deep} consists of a feature extractor and regressor. The feature extractor is a $1$-layer tanh network with $32$ hidden channels. We concatenate the time component to the sum component of the feature extractor before feeding the resulting $33$ size tensor into a $1$-layer tanh regressor network.

To train our flows, we minimize the KL divergence between our model distribution and the target distribution \citep{papamakarios2019normalizing}, as is done in \citet{boyda2020sampling}. In a training iteration, we draw a batch of samples uniformly from $SU(n)$, map them through our flow, and compute the gradients with respect to the batch KL divergence between our model probabilities and the target density probabilities. We use the Adam stochastic optimizer for gradient-based optimization \citep{kingma2014adam}. The graph shown in Figure \ref{fig:sun_densities} was trained for $300$ iterations with a batch size of $8192$ and weight decay setting of $0.01$; the starting learning rate for Adam was $0.01$, and a multi-step learning rate schedule that decreased the learning rate by a factor of $10$ every $100$ epochs was used. We use PyTorch to implement our models and run experiments~\cite{paszke2019pytorch}. Experiments are run on one CPU and/or GPU at a time, where we use one NVIDIA RTX 2080Ti GPU with 11 GB of GPU RAM.

\definecolor{mydarkblue}{rgb}{0,0.08,0.65}

We note that during our implementation, there are specific parts of the code that involved careful tuning for effective training. Specifically, we perturbed the results of certain functions and gradients by small constants to ensure numerical stability of the training process. We also spent some time tuning the learning rate and some ODE settings. More details can be found in the accompanying  \textcolor{mydarkblue}{\href{https://github.com/CUAI/Equivariant-Manifold-Flows}{Github code}}.

\subsection{Conjugation-Invariant Target Distributions}

\citet{boyda2020sampling} defined a family of matrix-conjugation-invariant densities on $SU(n)$ as:
\[
p_{toy}(U) =  \frac{1}{Z} e^{\frac{\beta}{n} \text{Re}\:\text{tr}\left( \sum_k c_k U^k \right)} , \]
which is parameterized by scalars $c_k$ and $\beta$. The normalizing constant $Z$ is chosen to ensure that $p_{toy}$ is a valid probability density with respect to the Haar measure.

More specifically, the experiments of \citet{boyda2020sampling} focus on learning to sample from the distribution with the above density with three components, in the following form: \[
p_{toy}(U) =  \frac{1}{Z} e^{\frac{\beta}{n} \text{Re}\:\text{tr}\left(c_{1} U+c_{2}U^{2}+c_{3}U^{3} \right)}\]
We tested on three instances of the density, also used in \citet{boyda2020sampling}: \begin{table}[H]
    \centering
    \begin{tabular}{c c c c c}
        \hline
        set $i$ & $c_{1}$ & $c_{2}$ & $c_{3}$ & $\beta$ \\
        \hline
        1 & 0.98 & -0.63 & -0.21 & 9 \\
        2 & 0.17 & -0.65 & 1.22 & 9 \\
        3 & 1 & 0 & 0 & 9 \\
        \hline
    \end{tabular}
    \caption{Sets of parameters $c_{1}, c_{2}, c_{3}$ and $\beta$ used in the $SU(2)$ and $SU(3)$ experiments}
    \label{tab:parameter_target_distribution}
\end{table}

Note that the rows of Figure \ref{fig:sun_densities} correspond to coefficient sets $3,2,1$, given in order from top to bottom.

\subsubsection{Case for \texorpdfstring{$SU(2)$}{SU(2)}} \label{appendix:target}

In the case of $n=2$, we can represent the eigenvalues of a matrix $U \in SU(2)$ in the form $e^{i\theta}, e^{-i\theta}$ for some angle $\theta\in [0, \pi]$. We then have $\mrm{tr}(U) = e^{i\theta} + e^{-i\theta} = 2\cos(\theta)$, so above density takes the form:
\[
p_{toy}(U) = \frac{1}{Z} e^{c_1\beta \cos \theta}  \cdot e^{c_2\beta \cos (2\theta)}  \cdot e^{c_3\beta \cos (3\theta)}.
\]

\subsubsection{Case for \texorpdfstring{$SU(3)$}{SU(3)}} \label{appendix:target_su3}

In the case of $n=3$, we can represent the eigenvalues of $U\in SU(3)$ in the form $e^{i\theta_{1}}, e^{i\theta_{2}}, e^{i(-\theta_{1}-\theta_{2})}$. Thus, we have \begin{equation*}
    \text{Re}\:\text{tr}(U)=\frac{1}{3}\left(\cos(\theta_{1})+\cos(\theta_{2})+\cos(-\theta_{1}-\theta_{2})\right)
\end{equation*}
and thus \begin{align*}
    p_{toy}(U)&=\frac{1}{Z}e^{\frac{c_{1}\beta}{3}\left(\cos(\theta_{1})+\cos(\theta_{2})+\cos(-\theta_{1}-\theta_{2})\right)} \\
    &\cdot e^{\frac{c_{2}\beta}{3}\left(\cos(2\theta_{1})+\cos(2\theta_{2})+\cos(-2\theta_{1}-2\theta_{2})\right)} \\
    &\cdot e^{\frac{c_{3}\beta}{3}\left(\cos(3\theta_{1})+\cos(3\theta_{2})+\cos(-3\theta_{1}-3\theta_{2})\right)} 
\end{align*}

\section{Learning Continuous Normalizing Flows over Manifolds with Boundary}

\textbf{Motivation.} Recall that learning a continuous normalizing flow over a manifold with boundary is not principled, and is rather numerically unstable, since probability mass can ``flow out" on the boundary. In particular we noted in Section \ref{sec:intro} that this was a major problem for the quotient manifold approach to learning invariant densities, since the quotient frequently has a nonempty boundary.

\textbf{Our Approach.} Our method enables learning flows over manifolds with boundary. One need only represent the manifold with boundary as a quotient of a larger manifold without boundary and learn with an invariant potential function that ensures the density descends smoothly from the larger manifold without boundary to the manifold with boundary.

\textbf{Example.} For instance, one can use our method to construct a flow over an interval. Notice that we can view an interval $I = [0,1]$ as a manifold with boundary. The boundary consists of the two endpoints, $\{0,1\}$. To use our method to learn a flow over this interval, we need only represent $[0,1]$ as the quotient of $S^2$ by the isotropy group at the north pole, then apply the flow construction described in Section \ref{sec:sphere_model}. The learned density assigns the same value to all points at the same latitude: clearly, this descends to a density over $[0,1]$ by taking one representative point from each latitude circle. Notice that this works more generally: we can represent various manifolds with boundary as quotients of larger manifolds by isotropy groups. In particular, one can imagine using this method to replace neural spline flows \citep{Durkan2019NeuralSF}, which carefully constructs noncontinuous normalizing flows over intervals.





\end{document}